\newcommand{\Bmath}[1]{\mbox{\bf {#1}}}
\newcommand{\Bx}{\Bmath{x}}
\newcommand{\By}{\Bmath{y}}
\def\x{{\Bmath x}}
\def\u{{\Bmath u}}
\newtheorem{theorem}{Theorem}
\newtheorem{lemma}[theorem]{Lemma}
\theoremstyle{definition}
\newtheorem{definition}[theorem]{Definition}
\newtheorem{remark}[theorem]{Remark}
\newcommand{\norm}[1]{\left\lVert#1\right\rVert}
\title{Universality of Real Minimal Complexity Reservoir}
\author{Robert Simon Fong}
\address{School of Computer Science, University of Birmingham, Birmingham, B15 2TT, UK}
\email{r.s.fong@bham.ac.uk}
\author{Boyu Li}
\address{Department of Mathematical Sciences, New Mexico State University, Las Cruces, New Mexico, 88003, USA}
\email{boyuli@nmsu.edu}
\author{Peter Ti\v{n}o}
\address{School of Computer Science, University of Birmingham, Birmingham, B15 2TT, UK}
\email{p.tino@bham.ac.uk}
\begin{document}

\begin{abstract}
Reservoir Computing (RC) models, a subclass of recurrent neural networks, are distinguished by their fixed, non-trainable input layer and dynamically coupled reservoir, with only the static readout layer being trained. This design circumvents the issues associated with backpropagating error signals through time, thereby enhancing both stability and training efficiency. RC models have been successfully applied across a broad range of application domains. Crucially, they have been demonstrated to be universal approximators of time-invariant dynamic filters with fading memory, under various settings of approximation norms and input driving sources.

Simple Cycle Reservoirs (SCR) represent a specialized class of RC models with a highly constrained reservoir architecture, characterized by uniform ring connectivity and binary input-to-reservoir weights with an aperiodic sign pattern. For linear reservoirs, given the reservoir size, the reservoir construction has only one degree of freedom -- the reservoir cycle weight. Such architectures are particularly amenable to hardware implementations without significant performance degradation in many practical tasks. 
In this study we endow these observations with solid theoretical foundations by proving that SCRs operating in real domain are universal approximators of time-invariant dynamic filters with fading memory.
Our results supplement recent research showing that SCRs in the complex domain can approximate, to arbitrary precision, any unrestricted linear reservoir with a non-linear readout.
We furthermore introduce a novel method to drastically reduce the number of SCR units, making such highly constrained architectures natural candidates for low-complexity hardware implementations. Our findings are supported by empirical studies on real-world time series datasets.
\end{abstract}

\maketitle

\section{Introduction}

Reservoir Computing (RC) is a subclass of Recurrent Neural Network defined by a fixed parametric state space representation (the reservoir) and a static trained readout map. This distinctive approach not only simplifies the training process by focusing adjustments solely on the static readout layer but also enhances computational efficiency.

The simplest recurrent neural network realization of RC \cite{Jaeger2001,Maass2002,Tino2001,Lukoservicius2009} are known as Echo State Networks (ESN) \cite{Jaeger2001,Jaeger2002,jaeger2002a,Jaeger2004}. 
The representation capacity of ESNs have been demonstrated in a series of papers, showing \emph{existentially} that ESNs can approximate any time-invariant dynamic filters with fading memory in a variety of settings  \cite{grigoryeva2018echo,Grigoryeva2018, Grigoryeva2018, gonon2019reservoir}. 
 
Simple Cycle Reservoirs (SCRs) \cite{rodan2010minimum} are RC models characterized by a highly restricted architecture: a uniform weight ring connectivity among reservoir neurons and binary uniformly weighted input-to-reservoir connections. This simplicity is particularly advantageous for hardware implementations, reducing implementation and computational costs, as well as enhancing real-time processing capabilities without degrading performance. 
However, while cyclic reservoir topology with a single connection weight has been adopted in a variety of hardware implementations  \cite{Appeltant2011InformationPU, NTT_cyclic_RC,bienstman2017,Abe2024May}, its theoretical foundations have been missing.   
To rectify this situation we rigorously prove
that SCRs operating in real domain are universal approximators of time-invariant dynamic filters with fading memory. Our results supplement recent research \cite{li2023simple} showing that SCRs in the complex domain can approximate,
to arbitrary precision, any unrestricted linear reservoir with a non-linear readout, opening the door to a broad range of practical scenarios involving real-valued computations. 
Furthermore, based on our constructive proof arguments, we formulate a novel method to drastically reduce the number of SCR units, making such highly constrained architectures natural candidates for low-complexity hardware implementations.

We emphasize that proving that SCR architecture retains universality when moving from the complex to the real domain is far from straightforward. Indeed, as shown in  \cite{li2023simple}, attempts to even partially restrict SCR in the complex domain to the real one result in more complex multi-reservoir structures.\footnote{ 
As we will show, to retain advantages of the single simple SCR structure in the real domain {\em and} retain universality one needs to consider   
orthogonal similarity throughout the approximation pipeline, as well as 
completion of the set of roots of unity in the canonical form of orthogonal matrices for
cyclic dilation of orthogonally dilated coupling matrices.}

We conclude the paper with numerical experiments that validate the structural approximation properties demonstrated in our theoretical analysis. In particular, we begin with an arbitrary linear reservoir system with linear readout and demonstrate, on real-life datasets, that the approximation SCR counterparts will gradually approach the original system as the number of reservoir neurons increase, reinforcing the theoretical insights provided.

\section{The setup}

We first introduce the basic building blocks needed for the developments in this study.

\begin{definition}\label{def.lrc} A \textbf{linear reservoir system over $\mathbb{R}$} is formally defined as the triplet $R:= (W,V,h)$ where the \textbf{dynamic coupling} $W$ is an $n\times n$ real-valued matrix, the \textbf{input-to-state coupling} $V$ is an $n\times m$ real-valued matrix, and the state-to-output mapping (\textbf{readout}) $h:\mathbb{R}^n \to \mathbb{R}^d$ is a (trainable) continuous function. 

The corresponding linear dynamical system is given by:
\begin{equation} \label{eq.system}
   \begin{cases} \Bx_t &= W \Bx_{t - 1} + V \u_t \\
    \By_t &= h(\Bx_t)
    \end{cases}
\end{equation}
where $\{\u_t\}_{t\in\mathbb{Z}_-} \subset \mathbb{R}^m$, $\{\Bx_t\}_{t\in\mathbb{Z}_-} \subset \mathbb{R}^n$, and $\{\By_t\}_{t\in\mathbb{Z}_-} \subset \mathbb{R}^d$ are the external inputs, states and outputs, respectively.
We abbreviate the dimensions of $R$ by $(n,m,d)$.
\end{definition}

We make the same assumption as in \cite[Definition 1]{li2023simple}, that
\begin{enumerate}
    \item The input stream $\{\u_t\}$ is uniformly bounded by a constant $M$.
    \item The dynamic coupling matrix $W$ has its operator norm $\norm{W}<1$. 
\end{enumerate}
The only difference in this present work is the requirement that all the matrices and vectors are over $\mathbb{R}$. 
Under the assumptions, for each left infinite time series
$u = \{\u_t\}_{t\in\mathbb{Z}_-}$, the system~\eqref{eq.system} has a unique solution given 
by
\begin{align*}
    {\Bx_t}(u) &=\sum_{n\geq 0} W^n V \u_{t-n}, \\
    {\By_t}(u) &= h({{\Bx_t}(u)}). 
\end{align*}
We refer to the solution simply as $\{(\Bx_t,\By_t)\}_t$.

\begin{definition} For two reservoir systems $R=(W,V,h)$ (with dimensions $(n,m,d)$) and $R'=(W', V', h')$ (with dimensions $(n',m,d)$): 
\begin{enumerate}
    \item We say the two systems are \textbf{equivalent} if for any input stream $u=\{\u_t\}_{t\in\mathbb{Z}_-}$, the solutions $\{(\Bx_t,\By_t)\}_t$ and $\{(\Bx'_t,\By'_t)\}_t$ for systems $R$ and $R'$
     satisfy ${\By}_t={\By}_t'$ for all $t$.
    
    \item For $\epsilon>0$, we say the \textbf{two systems are $\epsilon$-close} if for any input stream $u=\{\u_t\}_{t\in\mathbb{Z}_-}$, the solutions of the two systems (under the notation above) satisfy  $\norm{{\By}_t-{\By}_t'}_2<\epsilon$ for all $t$.
\end{enumerate}
\end{definition}

We now define the main object of interest in this paper. We begin by the following definition.

\begin{definition} Let $P=[p_{ij}]$ be an $n\times n$ matrix. We say $P$ is a \textbf{permutation matrix} if there exists a permutation $\sigma$ in the symmetric group $S_n$ such that:
\[p_{ij}=\begin{cases} 1, &\text{ if }\sigma(i)=j, \\0, &\text{ if otherwise.}\end{cases}\]
We say a permutation matrix $P$ is a \textbf{full-cycle permutation}\footnote{Also called left circular shift or cyclic permutation in the literature} if its corresponding permutation $\sigma\in S_n$ is a cycle permutation of length $n$.  Finally, a matrix $W=cP$ is called a \textbf{contractive full-cycle permutation} if $c\in(0,1)$ and $P$ is a full-cycle permutation. 
\end{definition}

Simple cycle reservoir systems originate from the minimum complexity reservoir systems  {introduced} in \cite{rodan2010minimum}:

\begin{definition}
\label{def.rc}
    A linear reservoir system $R = \left(W,V,h\right)$ with dimensions $(n,m,d)$ is called a \textbf{Simple Cycle Reservoir (SCR) over $\mathbb{R}$}
    \footnote
     {We note that the assumption on the aperiodicity of the sign pattern in $V$ is not required for this study}
    if 
    \begin{enumerate}
        \item $W$ is a contractive full-cycle permutation, and
        \item $V \in \mathbb{M}_{n \times m}\left(\left\{-1,1\right\}\right)$. 
    \end{enumerate}
\end{definition}

Our goal is to show that every linear reservoir system is $\epsilon$-close to a SCR over $\mathbb{R}$. This is done in a way that does not increase the complexity of the  {readout} function $h$. To be precise:

\begin{definition}
We say that \textbf{a function $g$ is $h$ with linearly transformed domain} if $g(\x) = h(A\x)$ for some matrix $A$. 
\end{definition}

\section{Universality of Orthogonal Dynamical Coupling}
\label{sec.dilation}

In \cite{Halmos1950}, Halmos raised the question of what kind of operators can be embedded as a corner of a normal operator. 
He observed that one can embed a contractive operator $W$ inside a unitary operator:
\[\tilde U=\begin{bmatrix}
W & D_{W^\top} \\
D_W & -W^\top
\end{bmatrix}\] 
where $D_W=(I-W^\top W)^{1/2}$ and $D_{W^\top}=(I-WW^\top)^{1/2}$. 
This motivated the rich study of the dilation theory of linear operators. One may refer to \cite{PaulsenBook} for more comprehensive background on dilation theory. In the recent study \cite{li2023simple}, the authors used a dilation theory technique to obtain an $\epsilon$-close reservoir system with a unitary matrix as the dynamic coupling. A key idea is the theorem of Egerv\'{a}ry, which states that for any $n\times n$ matrix $W$ with $\|W\|\leq 1$ and $N>1$, there exists a $(N+1)n\times (N+1)n$ unitary matrix $U$ such that the upper left $n\times n$ corner of $U^k$ is $W^k$ for all $1\leq k\leq N$. In fact, this matrix $U$ can be constructed explicitly as:
\[U=\begin{bmatrix}
W   & 0  & 0 & & \hdots      & 0   & D_{W^\top} \\
D_W & 0  & 0 & & \hdots      & 0   & -W^\top \\
 0   & I &  0     & \hdots   & & 0 & 0 \\
 \vdots   &   & \ddots&    &   &  \vdots & \vdots\\
 0   & \hdots  &  &  &   & I  & 0 
\end{bmatrix}.\]
We first notice that when $W$ is a matrix over $\mathbb{R}$, this dilation matrix $U$ is over $\mathbb{R}$ as well. Therefore, $U$ is an orthogonal matrix. This technique allows us to obtain an $\epsilon$-close reservoir system with an orthogonal dynamic coupling matrix.

\begin{theorem}\label{thm.dilation} Let $R=(W,V,h)$ be a reservoir system defined by contraction $W$ with $\norm{W}=:\lambda\in (0,1)$. Given $\epsilon>0$, there exists a reservoir system $R'=(W', V', h')$ that is $\epsilon$-close to $R$, with dynamic coupling $W'=\lambda U$, where $U$ is orthogonal. Moreover, $h'$ is $h$ with linearly transformed domain.
\end{theorem}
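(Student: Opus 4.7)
The plan is to apply the Egerv\'{a}ry construction recalled in this section to a rescaled copy of $W$, so that the resulting orthogonal dilation, multiplied back by $\lambda$, produces the required dynamic coupling $W' = \lambda U$. Concretely, I would set $\widetilde{W} := W/\lambda$, so $\|\widetilde{W}\| = 1$. For any integer $N \geq 1$ (to be chosen later), Egerv\'{a}ry's theorem then yields an orthogonal matrix $U \in \mathbb{R}^{(N+1)n \times (N+1)n}$ (real, because $\widetilde{W}$ is real and hence $D_{\widetilde{W}}$, $D_{\widetilde{W}^\top}$ are real symmetric) whose upper-left $n \times n$ block of $U^k$ equals $\widetilde{W}^k$ for every $1 \leq k \leq N$. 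Defining $W' := \lambda U$, the upper-left $n \times n$ block of $(W')^k = \lambda^k U^k$ matches $\lambda^k \widetilde{W}^k = W^k$ for all $1 \leq k \leq N$.

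I then extend the input-to-state coupling and readout consistently:
\[
V' := \begin{bmatrix} V \\ 0 \end{bmatrix} \in \mathbb{R}^{(N+1)n \times m}, \qquad h'(\Bx') := h(P\Bx'),
\]
where $P = [\,I_n \ \ 0\,] \in \mathbb{R}^{n \times (N+1)n}$ is the projection onto the first $n$ coordinates. By construction $h'$ is $h$ with linearly transformed domain. Because $PV' = V$, the first $n$ coordinates of the solution of $R'$ satisfy
\[
P\Bx'_t \;=\; \sum_{k \geq 0} \bigl[\text{top-left $n \times n$ block of }(W')^k\bigr]\, V \u_{t-k}.
\]
Comparing with $\Bx_t = \sum_{k \geq 0} W^k V \u_{t-k}$, the terms for $0 \leq k \leq N$ agree exactly, while each term with $k > N$ differs in norm by at most $2\lambda^k \|V\|M$ (since $\|(W')^k\| \leq \lambda^k$ and $\|W^k\| \leq \lambda^k$). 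Summing the geometric tail gives $\|\Bx_t - P\Bx'_t\| \leq 2\lambda^{N+1}\|V\|M/(1-\lambda)$, uniformly in $t$, which can be made arbitrarily small by taking $N$ large.

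The final step is to lift this state-level approximation to an output-level one. Both $\Bx_t$ and $P\Bx'_t$ lie in the closed ball $B \subset \mathbb{R}^n$ of radius $\|V\|M/(1-\lambda)$ (by the same geometric estimate applied to each trajectory). The continuous readout $h$ is therefore uniformly continuous on the compact set $B$, so for the given $\epsilon > 0$ there is a $\delta > 0$ with $\|\mathbf{a} - \mathbf{b}\| < \delta \Rightarrow \|h(\mathbf{a}) - h(\mathbf{b})\| < \epsilon$ for $\mathbf{a}, \mathbf{b} \in B$. Choosing $N$ large enough that $2\lambda^{N+1}\|V\|M/(1-\lambda) < \delta$ then yields $\|\By_t - \By'_t\|_2 = \|h(\Bx_t) - h(P\Bx'_t)\| < \epsilon$ for every $t$, which is exactly $\epsilon$-closeness of $R'$ and $R$. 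The only real obstacle I anticipate is this last conversion: since $h$ is assumed merely continuous (not Lipschitz), the clean geometric bound on the state error does not translate to the output automatically, and one must exploit compactness of the reachable state set to obtain a uniform modulus of continuity for $h$. Everything else (the rescaling trick, the zero-padding of $V$, and the projection in $h'$) is bookkeeping around the Egerv\'{a}ry dilation.
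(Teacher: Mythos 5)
Your proposal is correct and follows essentially the same route as the paper's own proof: rescaling $W$ by $\lambda$, applying Egerv\'{a}ry's real orthogonal dilation, zero-padding $V$, composing $h$ with the projection onto the first $n$ coordinates, bounding the state discrepancy by the geometric tail $2\lambda^{N+1}\|V\|M/(1-\lambda)$, and converting to an output bound via uniform continuity of $h$ on a compact set. Your explicit use of the closed ball of radius $\|V\|M/(1-\lambda)$ containing both trajectories is a slightly more careful phrasing of the compactness step the paper also relies on, but it is not a different argument.
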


\begin{proof}
{The proof follows that of an analogous statement in the complex domain in {\cite[Theorem 11]{li2023simple}}. The arguments follow through by replacing unitary matrices by orthogonal matrices and conjugate transpose by regular transpose.}
For completeness we present the proof in Appendix \ref{pf:thm.dilation}.
\end{proof}

\section{Universality of Cyclic Permutation Dynamical Coupling}
Having established universality of orthogonal dynamic coupling in the reservoir domain, we now continue to show that for any reservoir system with {orthogonal} state coupling, we can construct an equivalent reservoir system with cyclic coupling of state units weighted by a single common connection weight value. 
In broad terms we will employ the strategy of \cite{li2023simple}, but here we need to pay {special} attention to maintain all the matrices in the real domain.

We begin by invoking \cite[Proposition 12]{li2023simple}, which stated that matrix similarity of dynamical coupling implies reservoir equivalence. It therefore remains to be shown that for any given \emph{orthogonal} state coupling we can always find a full-cycle permutation that is close to it to arbitrary precision. Specifically, when given an orthogonal matrix, the goal is to perturb it to another orthogonal matrix that is \emph{orthogonally} equivalent to a permutation matrix. 
Here we \textbf{cannot} adopt the strategy in \cite[Section 5]{li2023simple} because it would inevitably involve a unitary matrix over $\mathbb{C}$ during the diagonalization process. Instead, we convert an orthogonal matrix to its canonical form via a (real) orthogonal matrix.

The core idea of our approach is schematically illustrated in Figure~\ref{fig:test}. Given a reservoir system with dynamic coupling $W$, we first find its equivalent with orthogonal coupling $U$. Rotational angles in the canonical form of $U$ are shown as red dots (a). Roots of unity corresponding to a sufficiently large cyclic dilation coupling can approximate the rotational angles to arbitrary precision $\epsilon$ (b). 

\begin{figure*}[ht!]
\centering
\begin{subfigure}{.4\textwidth}
  \centering
  \includegraphics[width=0.6\linewidth]{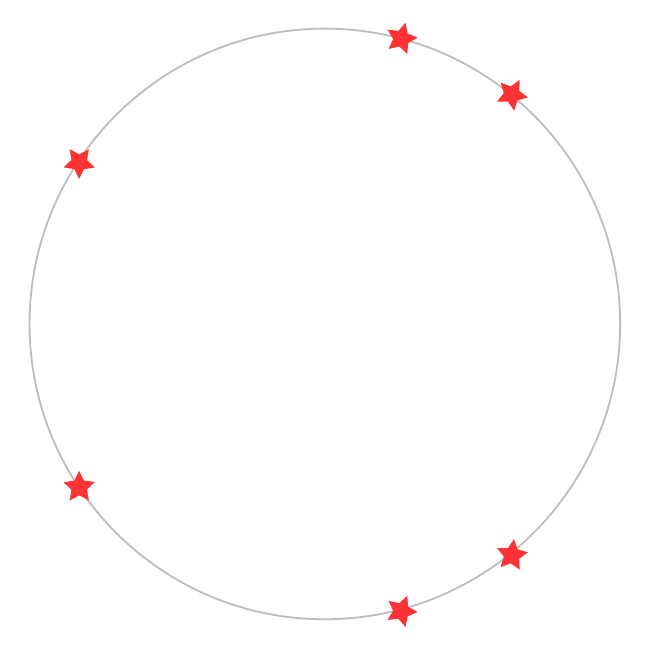}
  \caption{}
  \label{fig:sub1}
\end{subfigure}%
\begin{subfigure}{.6\textwidth}
  \centering
  \includegraphics[width=0.8\linewidth]{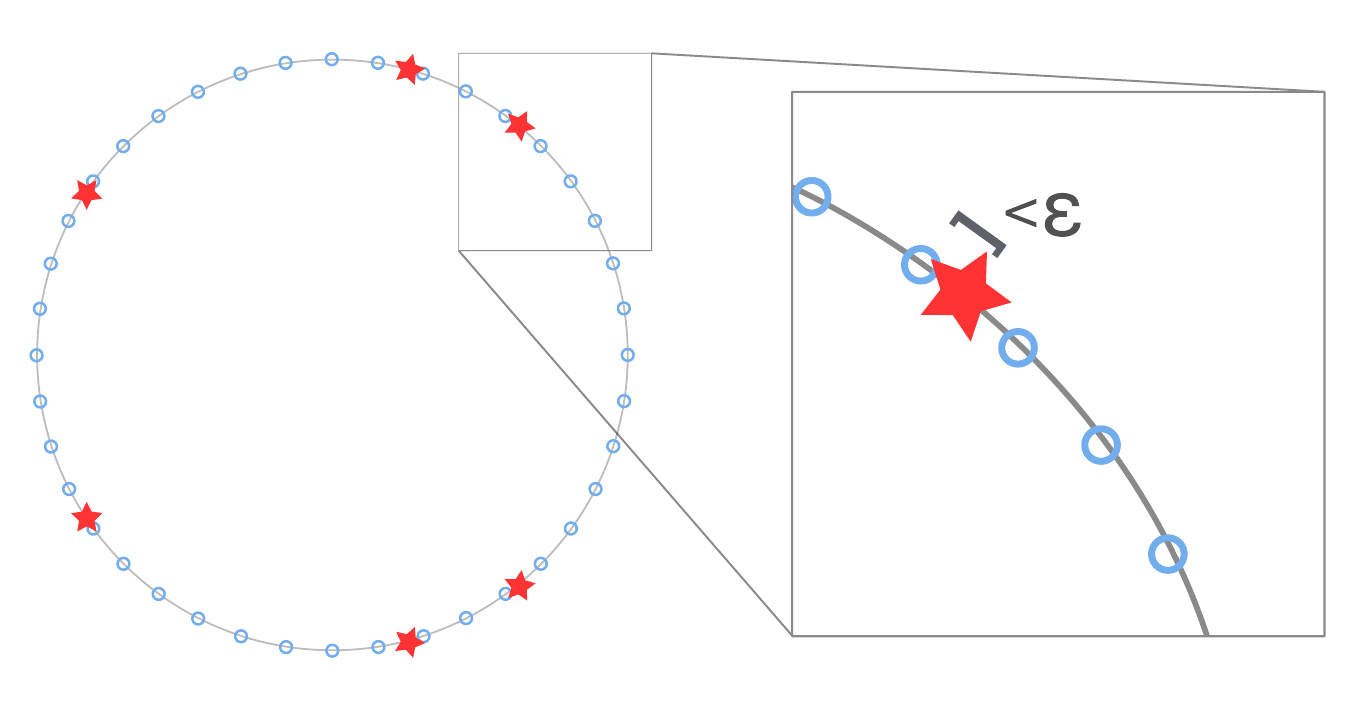}
  \caption{}
  \label{fig:sub2}
\end{subfigure}
\caption{
Schematic illustration of the core idea enabling us to prove universality of SCRs in the real domain. Given a reservoir system with dynamic coupling $W$ and its equivalent with orthogonal coupling $U$,
rotational angles in the canonical form of $U$ are shown as red dots (a). Roots of unity corresponding to cyclic dilation approximate the rotational angles to a prescribed precision $\epsilon$ (b).}

\label{fig:test}
\end{figure*}

We begin by recalling some elementary results of orthogonal and permutation matrices. For each $\theta\in [0, 2\pi)$, define the following rotation matrix:
\[R_\theta = \begin{bmatrix} \cos(\theta) & -\sin(\theta) \\ \sin(\theta) & \cos(\theta)\end{bmatrix}.\]
The eigenvalues of $R_\theta$ are precisely $e^{\pm i\theta}$. Moreover, notice that:
\begin{align*}
\begin{bmatrix} 0 & 1 \\ 1 & 0\end{bmatrix} R_{- \theta} \begin{bmatrix} 0 & 1 \\ 1 & 0\end{bmatrix}^{-1} &= \begin{bmatrix} 0 & 1 \\ 1 & 0\end{bmatrix} \begin{bmatrix} \cos(\theta) & \sin( \theta) \\ -\sin(\theta) & \cos(\theta)\end{bmatrix} \begin{bmatrix} 0 & 1 \\ 1 & 0\end{bmatrix} \\
& = \begin{bmatrix} \cos(\theta) & -\sin(\theta) \\ \sin(\theta) & \cos(\theta)\end{bmatrix} = R_\theta.
\end{align*}
Therefore, $R_\theta$ and $R_{-\theta}$ are orthogonally similar. 
Employing the real version of Schur decomposition, for any orthogonal matrix $C \in O(n)$, there exists an orthogonal matrix $S$ such that the product $S^\top C S$ has the following form:

\begin{align*}
S^\top C S &= \begin{bmatrix} R_{\theta_1} & & & & &\\
& \ddots &&&&\\
&& R_{\theta_k} &&& \\
&&& \pm 1 && \\
&&&  & \ddots&\\
&&&&& \pm 1
\end{bmatrix} \\
&=
\begin{bmatrix} R_{\theta_1} & & &\\
& \ddots &&\\
&& R_{\theta_k} & \\
&&& \Upsilon
\end{bmatrix},
\end{align*}
where $\theta_i\in (0,\pi)$, and $\Upsilon := \operatorname{diag}\{a_1, a_2,...,a_q\}$, $a_i \in \{-1,+1\}$,
$i=1,2,...,q$, is a diagonal matrix with $q$ entries of $\pm 1$'s.
For simplicity, 
\textbf{we will assume for the rest of the paper an even dimension $n$}, which inherently implies that $q$ is also even. The case when $n$ is odd is analogous and follows from similar arguments.

Note that without loss of generality, $\Upsilon$ can always take the form where $+1$'s (if any) preceded $-1$'s (if any). This can be achieved by permuting rows of $\Upsilon$ which is an orthogonality preserving operation  {and invoking \cite[Proposition 12]{li2023simple}}. This can be further simplified by observing:
\[R_0 = \begin{bmatrix} 1 & 0 \\ 0 & 1\end{bmatrix}, \ R_\pi = \begin{bmatrix} -1 & 0 \\ 0 & -1\end{bmatrix}.\]
 {Hence, pairs} of $+1$'s (and $-1$'s) can therefore be grouped as blocks of $R_0$ (and $R_\pi$). Therefore, without loss of generality, $S^\top C S$ is a block diagonal matrix consisting of $\{R_{\theta_1},\dots, R_{\theta_m}\}$,
$\theta_i\in[0,\pi]$, 
$i=1,2,...,m$, and at most one block of the form
$
\begin{bmatrix} 1 & 0 \\ 0 & -1\end{bmatrix}.$
 {In the literature} this is known as the \textbf{canonical form of the orthogonal matrix} $C$.

Given the inherent orthogonality of permutation matrices, their corresponding canonical forms can be computed. Given an integer $\ell \ge 1$, the complete set of $\ell$-th roots of unity is a collection of {uniformly positioned points} along the complex circle, denoted by  $\{e^{i\frac{2j\pi}{\ell}}: 0\leq j\leq \ell-1\}$. 

It is well-known from elementary matrix analysis that the eigenvalues of a full-cycle permutation matrix form a complete set of roots of unity. 

Therefore, given an $\ell\times \ell$ full-cycle permutation $P$ ($\ell$ even), we can find  {an} orthogonal matrix $Q$ such that $Q^\top P Q$ is a block diagonal matrix of $\{1, -1, R_{\frac{2\pi j}{\ell}}: 1\leq j < \frac{\ell}{2}\}$. Here, note that for each $1\leq j<\frac{\ell}{2}$, $R_{\frac{2\pi j}{\ell}}$ has two  {conjugate eigenvalues $e^{i\frac{2\pi j}{\ell}}$ and $e^{-i\frac{2\pi j}{\ell}}$.}
Hence, an $\ell \times \ell$ orthogonal matrix $X$ is orthogonally equivalent to a full-cycle permutation if and only if its canonical form consists of:
\begin{enumerate}
    \item A complete set of rotation matrices $\{R_{\frac{2\pi j}{\ell}}: 1\leq j < \frac{\ell}{2}\}$, and
    \item Two additional diagonal entries of $1$ and $-1$.
\end{enumerate}

\begin{theorem}\label{thm.perturb.unitary} Let $U$ be an $n\times n$ orthogonal matrix and $\delta>0$ be an arbitrarily small positive number. There exists $n_1 \ge n$, an $n_1\times n_1$ orthogonal matrix $S$, an $n_1\times n_1$ full-cycle permutation $P$ and an $(n_1-n)\times (n_1-n)$ orthogonal matrix $D$, such that 
\[
\norm{S^\top P S-\begin{bmatrix} U & 0 \\ 0 & D\end{bmatrix}}<\delta .
\]
\end{theorem}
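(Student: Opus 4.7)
My plan is to exploit the canonical-form description of orthogonal matrices recalled just before the theorem statement: every $n\times n$ orthogonal matrix is orthogonally similar to a block diagonal matrix whose blocks are $2\times 2$ rotations $R_{\theta_i}$ with $\theta_i\in[0,\pi]$, together with at most one $\begin{bmatrix}1&0\\0&-1\end{bmatrix}$ block. Crucially, an $\ell\times\ell$ full-cycle permutation $P$ (with $\ell$ even) has as its canonical form the \emph{complete} set of blocks $\{R_{2\pi j/\ell}: 1\le j\le \ell/2-1\}$ together with one $\begin{bmatrix}1&0\\0&-1\end{bmatrix}$. The strategy is therefore to choose a sufficiently large even $\ell$, perturb each rotation angle of $U$ to a nearby $\ell$-th root of unity, and pad with the remaining unused roots of unity so that the total spectrum becomes exactly the canonical form of an $\ell\times\ell$ full-cycle permutation.

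Concretely, I would first fix an orthogonal $S_1$ with $S_1^\top U S_1 = \Lambda_U$ in canonical form, having angles $\theta_1,\ldots,\theta_m\in[0,\pi]$ and possibly the $\begin{bmatrix}1&0\\0&-1\end{bmatrix}$ block. Setting $n_1=\ell$, I would then assign to each $\theta_i$ a distinct index $j_i\in\{1,\ldots,\ell/2-1\}$ so that $\bigl|\theta_i - 2\pi j_i/\ell\bigr|$ is small, and form a perturbed block diagonal $\tilde\Lambda$ by replacing each $R_{\theta_i}$ with $R_{2\pi j_i/\ell}$. Setting $\tilde U := S_1 \tilde\Lambda S_1^\top$, the bound $\norm{U-\tilde U}\le \max_i|\theta_i - 2\pi j_i/\ell|$ follows from orthogonal invariance of the operator norm, the block diagonal structure, and the elementary estimate $\norm{R_\alpha-R_\beta}=2|\sin((\alpha-\beta)/2)|\le|\alpha-\beta|$.

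Next, I let $D$ be the block diagonal matrix composed of the unused rotations $R_{2\pi j/\ell}$ for $j\in\{1,\ldots,\ell/2-1\}\setminus\{j_1,\ldots,j_m\}$, augmented with a $\begin{bmatrix}1&0\\0&-1\end{bmatrix}$ block exactly when $\Lambda_U$ lacked one. A dimension count verifies that $D$ is $(n_1-n)\times(n_1-n)$ and that the matrix $\begin{bmatrix}\tilde\Lambda & 0\\ 0 & D\end{bmatrix}$ contains, up to block ordering, precisely the canonical form of an $\ell\times\ell$ full-cycle permutation. Hence this matrix is orthogonally similar to $P$ via some orthogonal matrix, and conjugating further by $\begin{bmatrix}S_1 & 0\\ 0 & I\end{bmatrix}$ (together with a permutation reordering the blocks) yields an orthogonal $S$ with $S^\top P S = \begin{bmatrix}\tilde U & 0\\ 0 & D\end{bmatrix}$. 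The desired inequality then reduces to $\norm{\tilde U-U}<\delta$, already bounded in the previous step.

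The step I expect to be the main obstacle is the combinatorial assignment of distinct $j_i$'s to the $\theta_i$'s with controlled displacement. If several $\theta_i$'s cluster near a common value, one cannot simply round each to its nearest lattice point: the $j_i$'s must be spread over consecutive lattice sites, so the worst-case displacement grows with the cluster size. A greedy sort-and-assign procedure (order the $\theta_i$'s, set $j_i$ to the nearest admissible lattice point strictly greater than $j_{i-1}$) gives a uniform bound of order $m/\ell$ on $\max_i|\theta_i-2\pi j_i/\ell|$; choosing $\ell$ sufficiently large relative to $m/\delta$ then closes the argument. Everything else amounts to bookkeeping with canonical forms and the orthogonal invariance of the operator norm.
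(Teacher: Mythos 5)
Your proposal follows essentially the same route as the paper's proof: reduce $U$ to its real canonical form, replace each rotation angle by a nearby \emph{distinct} root of unity (with the displacement bound degrading by a factor of the number of angles to handle clustering), pad with the unused rotations together with the diagonal entries $1$ and $-1$ so that the result is exactly the canonical form of an $n_1\times n_1$ full-cycle permutation, and conclude by orthogonal similarity and invariance of the operator norm. The only step to tighten is your specific greedy assignment, which as stated can run past the admissible range $\{1,\dots,\ell/2-1\}$ when several angles cluster near $\pi$; the paper handles this by observing that each $\theta_j$ has at least $k$ admissible lattice points within distance $\pi/\ell_0$ (its interval argument with $n_1=2\ell_0(k+1)$), so distinct representatives can always be chosen — a cosmetic repair rather than a different argument.
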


\begin{proof} 

We only prove the case when the canonical form of $U$ is in a block diagonal form consisting of $R_{\theta_1}, R_{\theta_2}, \dots, R_{\theta_k}$. The case when the canonical form contains additional entries of $\pm 1$ is analogous. Let $S_1$ be the orthogonal matrix such that $S_1^\top U S_1$ is in the canonical form. For fixed $\delta > 0$, pick an integer $\ell_0 > 0$ such that:
    $|1-e^{\frac{\pi i}{\ell_0}}|<\delta$.

For each $j = 1,\ldots k$, the interval $I_j := \left(\frac{\theta_j \ell_0 (k+1)}{\pi} - (k+1),\  \frac{\theta_j \ell_0 (k+1)}{\pi} + (k+1)\right)$ has length $2(k+1)$, and therefore contain $2(k+1)$ distinctive integers. Moreover, since $\theta_j\in [0,\pi]$, the interval $I_j$ contains at least $k$ distinct integers strictly within $(0, \ell_0 \cdot (k+1))$. From each interval $I_1,\ldots, I_k$ we can thus choose distinct integers $a_1, \cdots, a_k$, with $a_j \in I_j$ and $0< a_j < \ell_0 \cdot \left(k+1\right)$ such that for each $j = 1,\ldots,k$:
\[ 0 <\left|\frac{\theta_j \ell_0 (k+1)}{\pi} - a_j\right| < (k+1),\]
or equivalently,
\[ 0 <\left|\theta_j - \frac{a_j}{\ell_0 (k+1)} \cdot \pi\right| < \frac{\pi}{\ell_0}.\]

This implies: 
 {
\begin{align*}
    & \left|e^{i\theta_j} - e^{\pi i \frac{a_j}{\ell_0 \cdot (k+1)}}\right| = \left|1 - e^{i \left(-\theta_j + \frac{a_j}{\ell_0 \cdot (k+1)} \cdot \pi\right)}\right| \\
    & \quad = \left|1 - e^{i \left|\theta_j - \frac{a_j}{\ell_0 \cdot (k+1)} \cdot \pi\right|}\right|  \leq  \left|1 - e^{i \frac{\pi}{\ell_0}}\right| < \delta.
\end{align*}
}

Now for each $j = 1,\ldots,k$, set $\beta_j := \frac{\pi a_j}{\ell_0 (k+1)} \in (0,\pi)$. 
Let $A_0$ be the block diagonal matrix consisting of $\{R_{\beta_1}, \dots, R_{\beta_k}\}$. Since the eigenvalues of $R_{\beta_j}$ is within $\delta$ to $R_{\theta_j}$, we obtain:
\begin{align*}
    \norm{A_0 - S_1^\top U S_1} &= \norm{\begin{bmatrix} R_{\beta_1}-R_{\theta_1} & & \\
& \ddots &\\
&& R_{\beta_k}-R_{\theta_k} 
\end{bmatrix}} \\
&=\max\{\norm{R_{\beta_j} - R_{\theta_j}}\} < \delta,
\end{align*}
where $\norm{\cdot}$ denotes the operator norm.

Let $n_1 := 2\ell_0(k+1)$. We have each $\beta_j=\frac{a_j}{n_1} \cdot (2\pi)$, and therefore the rotations $R_{\beta_1}, \dots, R_{\beta_k}$ are all rotations of distinct $n_1$-roots of unity. Whilst this is not a \textit{complete} set of rotations of the roots of unity, we can complete the set of rotations by filling in the missing ones. In particular the missing set of rotations is given explicitly by: 
 {
\begin{align*}
    \mathcal{R}_1 := \left\{R_{\frac{2 \pi a}{n_1}}: 
    a \in \mathbb{Z}, 
    0<a<\frac{n_1}{2}, a\neq a_j\right\}
\end{align*}
}
Let $D$ denote the $\left(n_1 - n\right) \times \left(n_1 - n\right)$ block diagonal matrix consisting of:
\begin{enumerate}
    \item All the missing blocks of rotations described in $\mathcal{R}_1$, and
    \item { Two additional diagonal entries of $1$ and $-1$. 
    }
\end{enumerate} 
By construction $D$ is orthogonal since it contains block diagonal matrices of $\pm1$ and $R_\theta$. Consider the $n_1 \times n_1$ matrix $A :=(S_1 A_0 S_1^\top)\oplus D$. Then $A$ is orthogonal by construction and the canonical form of $A$ consists of:
\begin{enumerate}
    \item A complete set of rotations $R_{\frac{\pi a}{n_1}}, a \in \mathbb{Z}, 0<a<\frac{n_1}{2}$, and 
    \item An additional diagonal entry of $1$ when $n_1$ is odd and two additional diagonal entries of $1$ and $-1$ when $n_1$ is even.
\end{enumerate} 

This is precisely the canonical form of a $n_1 \times n_1$ full-cycle permutation. Let $P$ be a full-cycle permutation of dimension $n_1$, then there exists an orthogonal matrix $S$ such that $A=S^\top P S$. We have,
\begin{align*}\norm{S^\top P S - \begin{bmatrix} U & 0 \\ 0 & D\end{bmatrix}} &= \norm{\begin{bmatrix} S_1 A_0 S_1^\top - U & 0 \\ 0 & 0\end{bmatrix}}\\
&=\norm{A_0 - S_1^\top U S_1}<\delta, 
\end{align*}
as desired.
\end{proof}

\begin{remark} 
\label{rmk:graphmatching}
In practice, the dimension $n_1$ is usually much smaller than the theoretical upper bound of $2\ell_0(k+1)$. Here, the integer $\ell_0$ is chosen to satisfy $|1-e^{\frac{\pi i}{\ell_0}}|<\delta$, which equivalently means:
\begin{align*}
    \frac{\pi}{\ell_0} < \operatorname{arccos}\left(1-\frac{\delta^2}{2}\right). 
\end{align*}
In practice, a much lower dimension can be achieved.
Given a set of angles $\left\{\theta_i\right\}$ from an $n\times n$ orthogonal matrix $U$. For a fixed $n' > n$, we can use maximum matching program in a bipartite graph to check whether: for each $\theta_i$ there exists a distinct $k_i$ such that the root-of-unity $\frac{2 k_i \pi}{n'}$ approximates $\theta_i$. For fixed $n'$, define a bipartite graph $G$ with vertex set $A\cup B$ with $A=\{\theta_i\}$ and $B=\{\frac{2 a\pi}{n'}: 0<a<\frac{n'}{2}\}$. An edge $e$ joins $\theta_i\in A$ with $\frac{2 a\pi}{n'}\in B$ if $\left|e^{\theta_i i} - e^{\frac{2 a\pi i}{n'}}\right|<\delta $. One can easily see that we can find distinct $k_i$ to approximate $\theta_i$ by $\frac{2 k_i \pi}{n'}$ if and only if there exists a matching for this bipartite graph with exactly $|A|$ edges. We let $n_C$ denote the smallest $n'$ such that a desired maximum matching is achieved. We shall demonstrate later, in the numerical experiment section, that $n_C$ is significantly lower than the theoretical upper bound given by Theorem~\ref{thm.perturb.unitary}.
\end{remark}

\begin{theorem}\label{thm.to.permutation} Let $U$ be an $n\times n$ orthogonal matrix and $W = \lambda U$ with $\lambda \in \left(0,1\right)$. Let $R=(W,V,h)$ be a reservoir system with state coupling $W$. For any $\epsilon>0$, there exists a reservoir system $R_c=(W_c, V_c, h_c)$ that is $\epsilon$-close to $R$ such that:
\begin{enumerate}
\item $W_c$ is a contractive full-cycle permutation with $\|W_c\|=\|W\| = \lambda \in \left(0,1\right)$, and
\item $h_c$ is $h$ with linearly transformed domain. 
\end{enumerate}
\end{theorem}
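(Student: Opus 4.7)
The plan is to combine Theorem~\ref{thm.perturb.unitary} with the similarity-equivalence result \cite[Proposition 12]{li2023simple}, after first embedding $R$ into an equivalent larger reservoir whose state dimension matches the $n_1$ produced by Theorem~\ref{thm.perturb.unitary}.

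First I would fix a tolerance $\delta>0$ (to be pinned down at the end) and apply Theorem~\ref{thm.perturb.unitary} to $U$, obtaining $n_1 \ge n$, an orthogonal $S \in \mathbb{R}^{n_1\times n_1}$, an $n_1\times n_1$ full-cycle permutation $P$, and an orthogonal $D$ with $\norm{S^\top P S - \begin{bmatrix} U & 0 \\ 0 & D\end{bmatrix}} < \delta$. I would then form the \emph{inflated} system $R_0 := (W_0, V_0, h_0)$ with $W_0 := \lambda\begin{bmatrix} U & 0 \\ 0 & D\end{bmatrix}$, $V_0 := \begin{bmatrix} V \\ 0 \end{bmatrix}$, and $h_0(\Bx) := h(\pi_n \Bx)$, where $\pi_n$ projects onto the first $n$ coordinates. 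Since the bottom block of $V_0$ is zero, the tail coordinates of the steady state of $R_0$ vanish identically, so $R_0$ is equivalent to $R$.

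Next I would compare $R_0$ with $R_1 := (W_1, V_0, h_0)$, where $W_1 := \lambda S^\top P S$. The telescoping identity $W_0^k - W_1^k = \sum_{j=0}^{k-1} W_0^{k-1-j}(W_0-W_1) W_1^j$ combined with $\norm{W_0} = \norm{W_1} = \lambda$ yields $\norm{W_0^k - W_1^k} \le k\lambda^k\delta$. Plugging this into the explicit state-trajectory difference $\sum_{k\ge 0}(W_0^k-W_1^k)V_0\u_{t-k}$ and using $\norm{\u_t}\le M$ gives a $t$-uniform bound of order $\delta\,\|V\|\,M\,\lambda/(1-\lambda)^2$ on the difference of the two state trajectories. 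Both trajectories live in a compact set $K\subset\mathbb{R}^{n_1}$ fixed by $R$ and $M$ alone, and $h$ is uniformly continuous on $\pi_n(K)$. So for the prescribed $\epsilon>0$ I first pick $\eta>0$ from the modulus of uniform continuity of $h$ on $\pi_n(K)$, and then pick $\delta$ small enough that the state bound is below $\eta$. This makes $R_1$ $\epsilon$-close to $R_0$, hence to $R$.

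Finally, since $W_1 = S^\top(\lambda P)S$, I would invoke \cite[Proposition 12]{li2023simple} to pass to the equivalent system $R_c := (\lambda P,\ S V_0,\ h_c)$ with $h_c(\Bx) := h_0(S^\top \Bx) = h(\pi_n S^\top \Bx)$. Then $W_c := \lambda P$ is a contractive full-cycle permutation with $\norm{W_c} = \lambda$, and $h_c$ is $h$ with linearly transformed domain via the matrix $\pi_n S^\top$, exactly as required. The main obstacle is the step above: since $h$ is only assumed continuous, operator-norm closeness of the dynamic couplings does not automatically yield output-closeness. This is circumvented by the compactness observation, which works precisely because the state ball containing every trajectory is controlled by $(R,M)$ alone and does not depend on $\delta$, so $\delta$ can be retrofitted to the prescribed $\epsilon$.
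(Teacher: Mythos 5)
Your proposal is correct, and its skeleton (inflate $R$ to an equivalent system $R_0$ with coupling $\lambda\begin{bmatrix} U & 0\\ 0 & D\end{bmatrix}$ and zero-padded $V$, perturb to $R_1$ with coupling $\lambda S^\top P S$, then pass to $W_c=\lambda P$ by orthogonal similarity via \cite[Proposition 12]{li2023simple}) coincides with the paper's. Where you genuinely diverge is the quantitative heart of the argument: the paper first fixes a truncation level $N$ so that the tail $2M\norm{V}\sum_{k>N}\lambda^k$ is small, then chooses a second parameter $\delta_0$ and bounds the finitely many head terms by a binomial expansion, $\norm{W_0^k-W_1^k}\le(\lambda+\delta_0)^k-\lambda^k$; you instead exploit that \emph{both} $W_0$ and $W_1$ are exactly $\lambda$ times an orthogonal matrix, so the telescoping identity $W_0^k-W_1^k=\sum_{j=0}^{k-1}W_0^{k-1-j}(W_0-W_1)W_1^j$ gives $\norm{W_0^k-W_1^k}\le k\lambda^k\cdot\lambda\delta$ and the full series converges to the explicit bound $\delta\, M\norm{V}\,\lambda^{2}/(1-\lambda)^2$ on the state-trajectory gap, uniformly in $t$. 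Your route buys a cleaner, one-parameter argument with an explicit linear-in-$\delta$ estimate (so $\delta$ can be read off from the modulus of continuity of $h$ on the fixed compact ball of radius $M\norm{V}/(1-\lambda)$, which indeed does not depend on $\delta$); the paper's head/tail-plus-binomial route is slightly more robust, since it only needs $\norm{W_1}\le\lambda+\delta_0$ rather than the exact norm preservation that your telescoping bound relies on, but in the present setting that preservation is automatic and your argument is fully valid. Your final similarity step ($V_c=SV_0$, $h_c=h_0\circ S^\top$, so $h_c(\Bx)=h(\pi_n S^\top\Bx)$) matches the paper's conclusion and correctly delivers both claimed properties of $R_c$.
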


\begin{proof}
The proof follows that of an analogous statement in the complex domain in \cite[Theorem 14]{li2023simple}. The arguments follow through  by replacing unitary matrices by orthogonal matrices and conjugate transpose by regular transpose.

For completeness we present the proof in Appendix \ref{pf:thm.topermutation}
\end{proof}

\section{From Cyclic Permutation to SCR}

We have now ready to prove the main result (Theorem~\ref{thm.main.scr}). So far, we have proved that any linear reservoir system $R=(W, V, h)$ is $\epsilon$-close to another reservoir system $R'=(W', V', h')$ where $W'$ is a permutation or a full-cycle permutation. It remains to show that one can make the entries in the  {input-to-state coupling} matrix $V$ to be all $\pm 1$. 

We first recall the following useful Lemmas.

\begin{lemma}[{\cite[Lemma 16]{li2023simple}}]\label{lm.fullcycleW} Let $n,k$ be two natural numbers such that $\gcd(n,k)=1$. Let $P$ be an $n\times n$ full-cycle permutation. Consider the $nk\times nk$ matrix:
\[P_1=\begin{bmatrix} 
0   & 0  & 0 & & \hdots      & 0   & P \\
P & 0  & 0 & & \hdots      & 0   & 0 \\
 0   & P &  0     & \hdots   & & 0 & 0 \\
 \vdots   &   & \ddots&    &   &  \vdots & \vdots\\
 0   & \hdots  &  &  &   & P  & 0 
\end{bmatrix}.\]
Then $P_1$ is a full-cycle permutation. 
\end{lemma}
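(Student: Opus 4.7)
The plan is to translate the block structure of $P_1$ into an action on a product index set and reduce the cycle question to a coprimality statement about orbit lengths.

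First I would index the standard basis of $\mathbb{R}^{nk}$ by pairs $(i,j)$ with $0\le i<k$ and $0\le j<n$, where $i$ labels the block and $j$ the position inside the block. Let $\sigma\in S_n$ denote the $n$-cycle whose matrix is $P$, so that $P e_j = e_{\sigma(j)}$. Reading off the displayed form of $P_1$ block-by-block, I would verify that the block pattern is a cyclic shift of block-rows by one (mod $k$), so that
\[
P_1\, e_{(i,j)} \;=\; e_{((i+1)\bmod k,\; \sigma(j))}.
\]
In particular, since each block is a permutation matrix and the blocks are arranged so that each block-row and each block-column contains exactly one nonzero block, $P_1$ has exactly one $1$ in every row and column, hence is a permutation matrix. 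The problem thus reduces to showing the associated permutation on the $nk$ index pairs consists of a single cycle of length $nk$.

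Next I would iterate: for every $m\ge 0$,
\[
P_1^{\,m}\, e_{(i,j)} \;=\; e_{((i+m)\bmod k,\; \sigma^m(j))}.
\]
Taking $(i,j)=(0,0)$, the first-return time is the smallest $m\ge 1$ with $m\equiv 0\pmod k$ and $\sigma^m(0)=0$. Because $\sigma$ is an $n$-cycle, the second condition is equivalent to $n\mid m$. The two conditions together say $\mathrm{lcm}(n,k)\mid m$, and the hypothesis $\gcd(n,k)=1$ gives $\mathrm{lcm}(n,k)=nk$. So the orbit of $(0,0)$ under $P_1$ has length exactly $nk$, which equals the total number of basis vectors; hence the orbit is everything and $P_1$ is a single $nk$-cycle.

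I do not foresee a genuine obstacle. The only mild care needed is to align the indexing conventions with the precise layout of $P_1$ in the statement (direction of the block shift and of $\sigma$); either choice yields the same conclusion because the block shift and the internal action of $\sigma$ are carried out independently on the two coordinates, and the coprimality hypothesis is exactly what is required to force their joint period up to $nk$.
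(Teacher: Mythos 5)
Your argument is correct: identifying $P_1$ with the map $e_{(i,j)}\mapsto e_{((i+1)\bmod k,\,\sigma(j))}$ on the block-indexed basis, and observing that the first-return time of a single point is $\operatorname{lcm}(n,k)=nk$, does establish that $P_1$ is a permutation matrix consisting of one $nk$-cycle. Note that this paper does not prove the lemma itself --- it imports it verbatim from the cited reference --- so there is no in-paper proof to compare against; your orbit/coprimality argument is the standard, self-contained way to prove it (equivalently, $P_1$ is the $k\times k$ cyclic block shift acting jointly with $\sigma$ on a product index set, and $\gcd(n,k)=1$ forces the joint period up to $nk$), and the only point needing care is the one you flagged, namely matching the direction of the block shift to the displayed matrix, which does not affect the conclusion.
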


\begin{lemma}[{\cite[Lemma 17]{li2023simple}}]\label{lm.simpleV} For any $n\times m$ real matrix $V$ and $\delta>0$, there exists $k$ matrices $\{F_1, \cdots, F_k\} \subset \mathbb{M}_{n \times m}\left(\left\{-1,1\right\}\right)$ and a constant integer $N>0$ such that:
\[\norm{V-\frac{1}{N} \sum_{j=1}^k F_j} < \delta\]
Moreover, $k$ can be chosen such that $\gcd(k,n)=1$. 
\end{lemma}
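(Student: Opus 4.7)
The plan is to decouple the problem into two independent tasks: first, an entry-wise rational approximation of $V$ by matrices of the form $\frac{1}{N}A$ with $A \in \mathbb{Z}^{n \times m}$ satisfying certain parity and size constraints; second, an entry-wise assembly of the matrices $F_j$ realising that sum. The key observation is that for a common integer $k$ and divisor $N$, any entry of $\frac{1}{N}\sum_{j=1}^k F_j$ with $F_j \in \mathbb{M}_{n\times m}(\{-1,+1\})$ has the form $a/N$, where $a \in \mathbb{Z}$ satisfies $|a|\le k$ and $a\equiv k \pmod 2$. Conversely, any such target entry can be realised independently across $(i,j)$, because the only constraint $\sum_{j=1}^k (F_j)_{ij} = a$ couples the matrices at a single position and not across positions.

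First I would reduce to the case $\max_{i,j}|v_{ij}|\le 1$ by rescaling, which incurs only a constant factor overhead absorbed into $N$. Since $V$ has fixed dimensions, any entry-wise error bound $\delta'=\delta/\sqrt{nm}$ implies an operator-norm error of at most $\delta$. With $k/N\ge 1$ the target lattice $\{a/N : |a|\le k,\ a\equiv k\pmod 2\}$ covers $[-1,1]$ with spacing $2/N$, so by choosing $N$ large enough every entry $v_{ij}$ admits some $a_{ij}/N$ within $\delta'$.

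Next, given such integer targets $a_{ij}$, I would construct $F_1,\ldots,F_k$ entry by entry. At each position $(i,j)$ set $p_{ij}=(k+a_{ij})/2$ and $q_{ij}=(k-a_{ij})/2$; the parity condition on $a_{ij}$ makes these non-negative integers summing to $k$, so I can assign $+1$ to position $(i,j)$ in exactly $p_{ij}$ of the $F_j$'s and $-1$ in the remaining $q_{ij}$. Because the assignment at each coordinate is independent, this yields well-defined matrices $F_j \in \mathbb{M}_{n\times m}(\{-1,+1\})$ with $\sum_{j=1}^k F_j = (a_{ij})$, and hence $\left\lVert V - \frac{1}{N}\sum_{j=1}^k F_j\right\rVert < \delta$.

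Finally, to enforce $\gcd(k,n)=1$, I would adjoin cancelling pairs $G_\ell,-G_\ell$ of $\pm 1$ matrices, each pair leaving the sum unchanged while increasing $k$ by $2$. This keeps the approximation exact and shifts $k$ within its parity class. The main obstacle is thus reconciling the parity constraint on $a_{ij}$ (tied to the parity of $k$) with the divisibility condition $\gcd(k,n)=1$: if $n$ is even, coprimality forces $k$ odd, so I must commit to odd $k$ up front; if $n$ is odd, either parity works. Either way, the arithmetic progression $k, k+2, k+4,\ldots$ contains infinitely many integers coprime to $n$ by elementary CRT (no odd prime dividing $n$ divides every second term), and I pick the least such $k'\ge k$ and pad the construction with $(k'-k)/2$ cancelling pairs. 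This yields the desired $k$ satisfying $\gcd(k,n)=1$ while preserving $\left\lVert V - \frac{1}{N}\sum_{j=1}^{k} F_j\right\rVert < \delta$.
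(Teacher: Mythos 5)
Your proof is correct; note that this paper does not reprove the lemma but imports it from \cite[Lemma 17]{li2023simple}, and your entrywise scheme (approximate each entry by $a_{ij}/N$ with $a_{ij}\equiv k \pmod 2$, realise $a_{ij}$ by prescribing how many of the $F_j$ carry $+1$ at that position, then adjoin cancelling pairs $G_\ell,-G_\ell$ to push $k$ into a residue class coprime to $n$) is essentially the same averaging argument used there, specialised to the real case. One small slip: the initial rescaling to $\max_{i,j}|v_{ij}|\le 1$ cannot literally be absorbed into $N$ when the scale factor is non-integer, but it is also unnecessary --- your lattice argument works directly once $k\ge N\max_{i,j}|v_{ij}|+1$.
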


We now obtain our main theorem on the universality of SCR over $\mathbb{R}$. In comparison to \cite[Theorem 20]{li2023simple}, the coupling matrix $V$ in a SCR over $\mathbb{R}$ contains only $\pm 1$ instead of $\{\pm 1, \pm i\}$.

\begin{theorem}\label{thm.main.scr} For any reservoir system $R=(W,V,h)$ of dimensions $(n,m,d)$ and any $\epsilon>0$, there exists a SCR $R'=(W',V', h')$ of dimension $(n',m,d)$ that is $\epsilon$-close to $R$. 
Moreover, $\norm{W}=\norm{W'}$ and  $h'$ is $h$ with linearly transformed domain. 

\end{theorem}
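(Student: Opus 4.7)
The plan is to chain together the three universality statements already developed and then reduce the input-to-state coupling to $\pm 1$ entries via Lemmas~\ref{lm.fullcycleW} and~\ref{lm.simpleV}, following the blueprint of \cite[Theorem 20]{li2023simple} but keeping every matrix over $\mathbb{R}$. First I apply Theorem~\ref{thm.dilation} with tolerance $\epsilon/3$ to obtain an $\epsilon/3$-close reservoir $R_o=(\lambda U, V_o, h_o)$ with $U$ orthogonal, $\lambda=\norm{W}\in(0,1)$, and $h_o$ equal to $h$ with linearly transformed domain. Then I apply Theorem~\ref{thm.to.permutation} with tolerance $\epsilon/3$ to get $R_c=(\lambda P, V_c, h_c)$ that is $\epsilon/3$-close to $R_o$, where $P$ is an $n_c\times n_c$ full-cycle permutation, $\norm{\lambda P}=\lambda$, and $h_c$ is again $h$ with linearly transformed domain. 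The only obstruction to $R_c$ being an SCR is that the entries of $V_c$ are arbitrary reals.

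To handle this, I invoke Lemma~\ref{lm.simpleV} to pick matrices $F_1,\dots,F_k\in\mathbb{M}_{n_c\times m}(\{-1,+1\})$ and a positive integer $N$ with $\gcd(k,n_c)=1$ and $\norm{V_c-\tfrac{1}{N}\sum_{j=1}^k F_j}<\delta$, where $\delta>0$ is fixed small (in terms of $\epsilon$, $M$, $\lambda$, and a modulus of continuity of $h_c$ on a compact attractor described below). I then assemble the $n':=kn_c$ dimensional SCR $R'=(\lambda P_1, V', h')$, where $P_1$ is the block full-cycle permutation of Lemma~\ref{lm.fullcycleW} (which is indeed full-cycle thanks to $\gcd(k,n_c)=1$),
\[V'=\begin{bmatrix}F_1\\ F_2\\ \vdots\\ F_k\end{bmatrix}\in\mathbb{M}_{n'\times m}(\{-1,+1\}),\]
and $h'(\Bx'):=h_c(A\Bx')$ with $A:=\tfrac{1}{N}\,[I_{n_c}\ I_{n_c}\ \cdots\ I_{n_c}]$ the $n_c\times n'$ matrix that averages the $k$ blocks of $\Bx'$. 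Composing $A$ with the linear map inside $h_c$ shows $h'$ is $h$ with linearly transformed domain, and by construction $V'\in\mathbb{M}_{n'\times m}(\{-1,+1\})$ and $\norm{W'}=\lambda=\norm{W}$, so $R'$ is a genuine SCR over $\mathbb{R}$ of dimensions $(n',m,d)$.

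The key algebraic check is to show $A\Bx'_t\approx\Bx_{c,t}$. Expanding $\Bx'_t=\sum_{n\ge 0}(\lambda P_1)^n V'\u_{t-n}$ and tracking the cyclic block shift induced by $P_1$, block $i$ of $(\lambda P_1)^n V'\u_{t-n}$ equals $\lambda^n P^n F_{\pi(i,n)}\u_{t-n}$, where for each fixed $n$ the map $i\mapsto\pi(i,n)$ is a cyclic permutation of $\{1,\dots,k\}$. Summing over blocks and dividing by $N$ collapses the block-dependent pattern into $\tfrac{1}{N}\sum_{j=1}^k F_j$, uniformly in the time lag $n$, yielding
\[A\Bx'_t=\sum_{n\ge 0}\lambda^n P^n\Bigl(\tfrac{1}{N}\sum_{j=1}^k F_j\Bigr)\u_{t-n},\]
which lies within $\tfrac{M\delta}{1-\lambda}$ of $\Bx_{c,t}$ by a geometric-series estimate. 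Uniform continuity of $h_c$ on the compact set of reachable states (guaranteed by $\lambda<1$ and $\norm{\u_t}\le M$) then converts this into an output bound below $\epsilon/3$, and the three $\epsilon/3$ approximations combine via the triangle inequality into the required $\epsilon$-closeness.

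The main technical obstacle is the cyclic-sum bookkeeping: one must verify that as the block index $i$ ranges over $\{1,\dots,k\}$, the indices $\pi(i,n)$ visit every $F_j$ exactly once, \emph{uniformly in the time lag $n$}. This is precisely what forces the coprimality condition $\gcd(k,n_c)=1$ via Lemma~\ref{lm.fullcycleW}, and is the reason the real-domain construction cannot afford to collapse blocks or skip the stacking step. A secondary bookkeeping matter is calibrating $\delta$ through Lemma~\ref{lm.simpleV} in terms of the modulus of continuity of $h_c$ on the compact attractor determined by $M$ and $\lambda$, so that the state-level error propagates to an output error below $\epsilon/3$.
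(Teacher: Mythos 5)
Your proposal is correct and takes essentially the same route as the paper, whose proof simply defers to the construction of \cite[Theorem 20]{li2023simple}: chain Theorems~\ref{thm.dilation} and~\ref{thm.to.permutation}, approximate $V_c$ by $\tfrac{1}{N}\sum_{j=1}^k F_j$ via Lemma~\ref{lm.simpleV}, stack the sign matrices over the block full-cycle permutation of Lemma~\ref{lm.fullcycleW}, and absorb the block averaging into the readout, so that $h'$ remains $h$ with linearly transformed domain and $\norm{W'}=\lambda=\norm{W}$. The only nitpick is attributional rather than substantive: for each fixed lag $n$ the shifted block indices visit every $F_j$ exactly once for \emph{any} $k$ (it is just a cyclic shift of $\{1,\dots,k\}$), so the coprimality $\gcd(k,n_c)=1$ is needed only to guarantee that $P_1$ is a genuine full-cycle permutation, i.e.\ that $W'=\lambda P_1$ qualifies as an SCR coupling.
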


\begin{proof} {
One may refer to the proof of \cite[Theorem 20]{li2023simple}. Crucially, because the dynamic coupling matrix $V$ in the intermediate steps are all over $\mathbb{R}$ instead of $\mathbb{C}$, the resulting matrix $V'$ only have $\pm 1$.     }
\end{proof}

\cite{Grigoryeva2018}(Corollary 11) shows that linear reservoir systems with polynomial readouts are universal. They can approximate to arbitrary precision time-invariant fading memory filters. 
This result, together with Theorem \ref{thm.main.scr},  establish universality of SCRs in the real domain. Indeed, given a time-invariant fading memory filter, one can find an approximating linear reservoir system with polynomial readout $h$ approximating the filter to the desired precision.
By Theorem \ref{thm.main.scr}, we can in turn \emph{constructively} approximate this reservoir system with a SCR, again to arbitrary precision.
Moreover, the SCR readout is a polynomial of the same degree as $h$, since it is $h$ with linearly transformed domain.

\section{Numerical Experiments}
\label{sec.experiment}
We conclude the paper with numerical experiments  illustrating our contributions.

For reproducibility of the experiments, all experiments are CPU-based and are performed on Apple M3 Max with 128GB of RAM. The source code and data of the numerical analysis is openly available at \texttt{https://github.com/Lampertos/RSCR}.

\subsection{Dilation of Linear Reservoirs on Time Series Forecasting}
\label{sec.exp.dilation}
This section illustrates the structural approximation properties of linear reservoir systems when dilating the reservoir coupling matrix, when applied to univariate time series forecasting. 
{The readout function will be assumed to be linear throughout the numerical analysis in this section, as the primary objective of this paper is to examine the structural approximation properties of the state-space representation of linear reservoirs as determined by the coupling matrix.}

Initially, a linear reservoir system featuring a randomly generated coupling matrix $W$ is constructed. We then approximate this system by linear reservoir systems under two types of dilated coupling matrices:
(1) $U$ – Orthogonal dilation of $W$ (Theorem~\ref{thm.dilation}) , and
(2) $C$ – Cyclic dilation of $U$ (Theorem~\ref{thm.to.permutation}).

The results demonstrate that the prediction loss approximation error diminishes progressively as the dimension of dilation increases. 
{For demonstration purposes, we keep dimensionality of the original reservoir system to be approximated by SCR low ($n=5$). The elements of $W$ are independently sampled from the uniform distribution $U(0,1)$. The elements of input-to-state coupling $V$ is generated by scaling the binary expansion of the digits of $\pi$ by $0.05$ \cite{rodan2010minimum}.}
Univariate forecasting performance  of the initial and the approximating reservoir systems are compared on two popular datasets used in recent time series forecasting studies (e.g. \cite{zhou2021informer}
(adopting their training/validation/test data splits)):

\paragraph{ETT}
The Electricity Transformer Temperature dataset\footnote{\texttt{https://github.com/zhouhaoyi/ETDataset}}consists of measurements of oil temperature
and six external power-load features from transformers in two regions of China. The data was recorded for two years, and measurements are provided either hourly (indicated by 'h') or every $15$ minutes (indicated by 'm'). In this paper we used \texttt{oil temperature} of the \texttt{ETTm2} dataset for univariate prediction with train/validation/test split being $12$/$4$/$4$ months.

\paragraph{ECL}
The Electricity Consuming Load%
\footnote{\texttt{https://archive.ics.uci.edu/dataset/321/ \\ electricityloaddiagrams20112014}} consists of
hourly measurements of electricity consumption in kWh for 321 Portuguese clients during two years.
In this paper we used client \texttt{MT 320} for univariate prediction. The train/validation/test split is 15/3/4 months.

The readout $h$ of the original reservoir system is trained using ridge regression with a ridge coefficient of $10^{-9}$. Note that modified versions of the input-to-state map $V$ and the readout map $h$ will be employed in all subsequent dilated systems. Specifically, the readout map will not be subject to further training in these systems.
In all simulations, we maintain a spectral radius $\lambda = 0.9$ and prediction horizon is set to be $300$.\footnote{These two parameters are primarily chosen not for accuracy of the forecasting capacities but to demonstrate the structural approximation properties proven in this paper.}

The initial system $R=(W,V,h)$ is dilated over a set of pre-defined dilation dimensions $\mathcal{D} := \{ 2,  6, 10, 15, 19, 24, 28, 33, 37, 42\}$\footnote{Recall that the corresponding orthogonal dilation will have dimensions $\left(N+1\right)\cdot 5 \times \left(N+1\right) \cdot 5$ for each $N \in \mathcal{D}$.}. For each $N \in \mathcal{D}$, by Theorem~\ref{thm.dilation}, we construct a linear reservoir system $R_U$ with an orthogonal dynamic coupling $W_U$ of dimension $n_U =(N+1)n$. Then by Theorem~\ref{thm.to.permutation}, we dilate $R_U$ into an $\epsilon$-close linear reservoir system $R_C$ with contractive cyclic-permutation dynamic coupling. 

For the orthogonal dilation, the linear reservoir system $R_U := \left(W_U, V_U, h_U\right)$  is defined by:
\begin{align*}
    W_U &:= \lambda \cdot U, \text{ where } \\
    U &:= \begin{bmatrix}
        W   &   &       &    & D_{W^\top} \\
        D_W &   &       &    & -W^\top \\
            & I &       &    & \\
            &   & \ddots&    &   \\
            &   &       & I  & 0 
        \end{bmatrix} \in \mathbb{M}_{5 \cdot\left(N+1\right)\times 5 \cdot \left(N+1\right)}(\mathbb{R}) \\
 V_U &:=\begin{bmatrix}
    V  \\
    0
    \end{bmatrix},\quad h_U(x) = h\left(P_n(x)\right),
\end{align*}
where $P_n:\mathbb{R}^{5\left(N+1\right)}\hookrightarrow \mathbb{R}^5$ denote the projection onto the first $n = 5$ coordinates.

Since $U$ is orthogonal, it's canonical form $T_U$ can be obtained via the real version of Schur's decomposition:
\begin{align*}
    U = J_U T_U J_U^\top,
\end{align*}
where we note that both $U$ and $T_U$ have unit spectral radius.

By Remark~\ref{rmk:graphmatching}, given $\epsilon > 0$ and the canonical form $T_U$ of $U$, the maximum matching program in bipartite graphs allows us to find the canonical form of the $n_C\times n_C$-dimensional root-completed-matrix $A$ (along with the corresponding dimension $n_C$), given by $T := A_0 \oplus D$, described in the proof of Theorem~\ref{thm.perturb.unitary}. By construction $T$ is $\epsilon$-close to $T_U \oplus D$ in terms of operator norm.

Let $C$ be the $n_C\times n_C$-dimensional full cycle permutation matrix of unit spectral radius. Since $C$ is orthogonal, we can once again apply Schur's decomposition to obtain it's canonical form:
\begin{align*}
    C = J_C T_C J_C^\top.
\end{align*}

By construction, $T$ contains rotational matrices with angles at the roots of unity that are rearrangements of the eigenvalues of cyclic permutation of the same size. Therefore there exists permutation matrix $\tilde{P}$ such that:
\begin{align*}
    \tilde{P}T\tilde{P}^\top = T_C.
\end{align*}

Therefore by the proof of Theorem~\ref{thm.to.permutation} (following that of \cite[Theorem 14]{li2023simple}), the linear reservoir system $R_C:= \left(W_C, V_C, h_C\right)$ for the cyclic dilation is defined by:
\begin{align}
\label{eq:P}
    W_C &:= \lambda \cdot C, \quad 
    V_C:=P \begin{bmatrix}
    V_U  \\
    0
    \end{bmatrix}, \nonumber \\
 h_C(\Bx) &= h\left(P_{n}(P^\top \Bx)\right), \quad P := J_C\tilde{P}\overline{J}_U^\top, 
\end{align}
where $C\in \mathbb{M}_{n_c \times n_c}(\mathbb{R})$ is a cyclic permutation of dimension $n_c > 5\cdot \left(N+1\right)$, and $\overline{J}_U$ is a $n_c - 5\cdot \left(N+1\right) \times n_c - 5\cdot \left(N+1\right)$ matrix with an $J_U$ on the upper left hand corner and zero everywhere else.

By the uniform continuity of the readout $h$, it suffices to evaluate the closeness 
of the state trajectories of the original and the approximating cyclic dilation systems, as they are driven by the same input time series \texttt{ETTm2} and \texttt{ECL}. The two state activation sequences are not directly comparable, but they become comparable if the states of the approximating system are transformed by the orthogonal matrix $P$ (Equation~\eqref{eq:P}) and projected into the first $n$ coordinates. Figure~\ref{fig:norm_diff} shows the mean square differences between the states of the two systems as a function of the dilation dimension $N$. As expected, MSE between the states decays to zero exponentially as the dilation dimension increases.

\begin{figure}[ht!]
\centering
\begin{subfigure}{.5\columnwidth}
  \centering
  \includegraphics[width=1.1\linewidth]{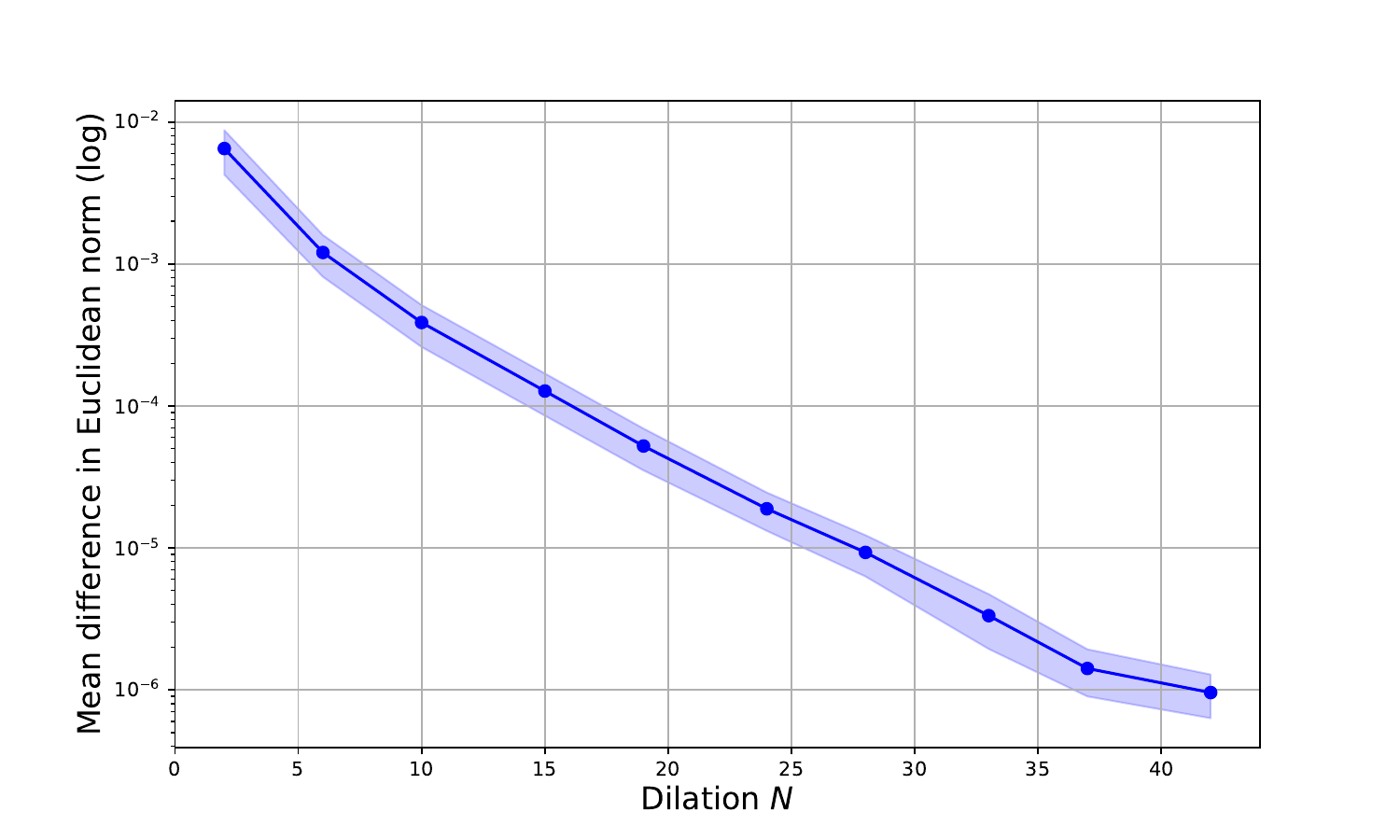}
  \caption{ECL}
\end{subfigure}%
\begin{subfigure}{.5\columnwidth}
  \centering
  \includegraphics[width=1.1\linewidth]{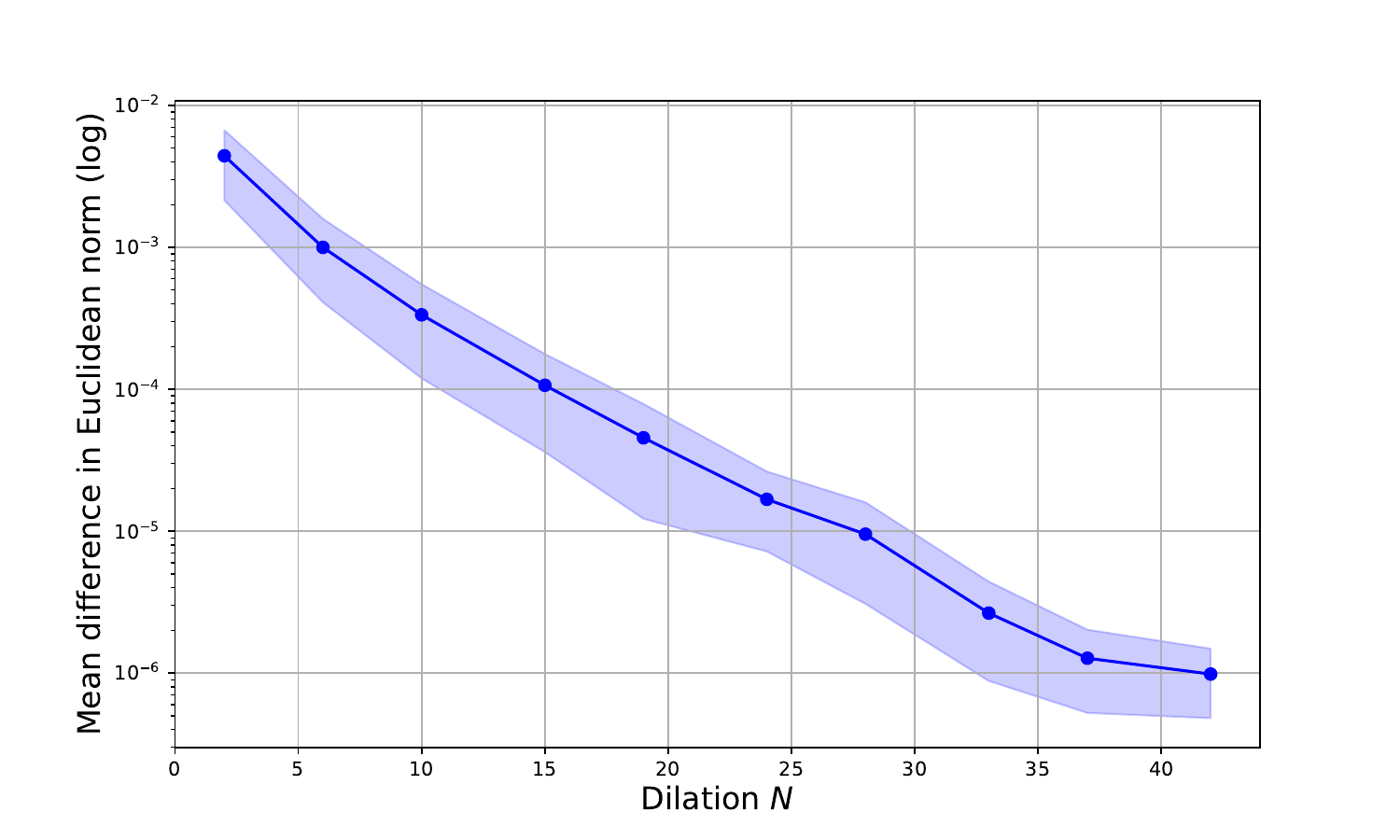}
  \caption{\texttt{ETTm2}}
\end{subfigure}
\caption{Mean and $95\%$ confidence intervals of the mean square differences of the states 
 of the original reservoir and the approximating cyclic dilation systems over 15 randomized generations of the original system. The data used is labelled in the sub-caption.}
\label{fig:norm_diff}
\end{figure}

\subsection{Reduction of dilation dimension with maximum matching in bipartite graph}
\label{sec.exp.matching}
In this section we illustrate how the dimension $n_C$ of the cyclic dilation obtained from the maximum matching program in bipartite graphs discussed in in Remark~\ref{rmk:graphmatching} 
can yield reservoir sizes drastically lower than the theoretical upper bound
given by the approximating full-cycle permutation $P$:
\[
   n_1 = 2\cdot \ell_0 \cdot \left(k+1\right)
    > \left\lceil 2\cdot \frac{\pi}{\operatorname{arccos}\left(1-\frac{\delta^2}{2}\right)} \cdot \left(k+1\right) \right\rceil.
\]

We generate $10$ orthogonal matrices $U$ uniformly randomly for each initial dimension $n \in \left\{20,40,\ldots,140,160\right\}$. We perform cyclic dilation as described in the previous section and compare it against the theoretical upper bound. Notice that the $y$-axis is in log scale. 
The dimension $n_C$ is significantly lower than the theoretical upper bound, reaching $\approx 300-400$ units for initial reservoirs of size $80-160$, which is well within possibilities of hardware implementations of such reservoirs. 

\begin{figure}[ht!]
\centering
  \includegraphics[width=0.99\linewidth]{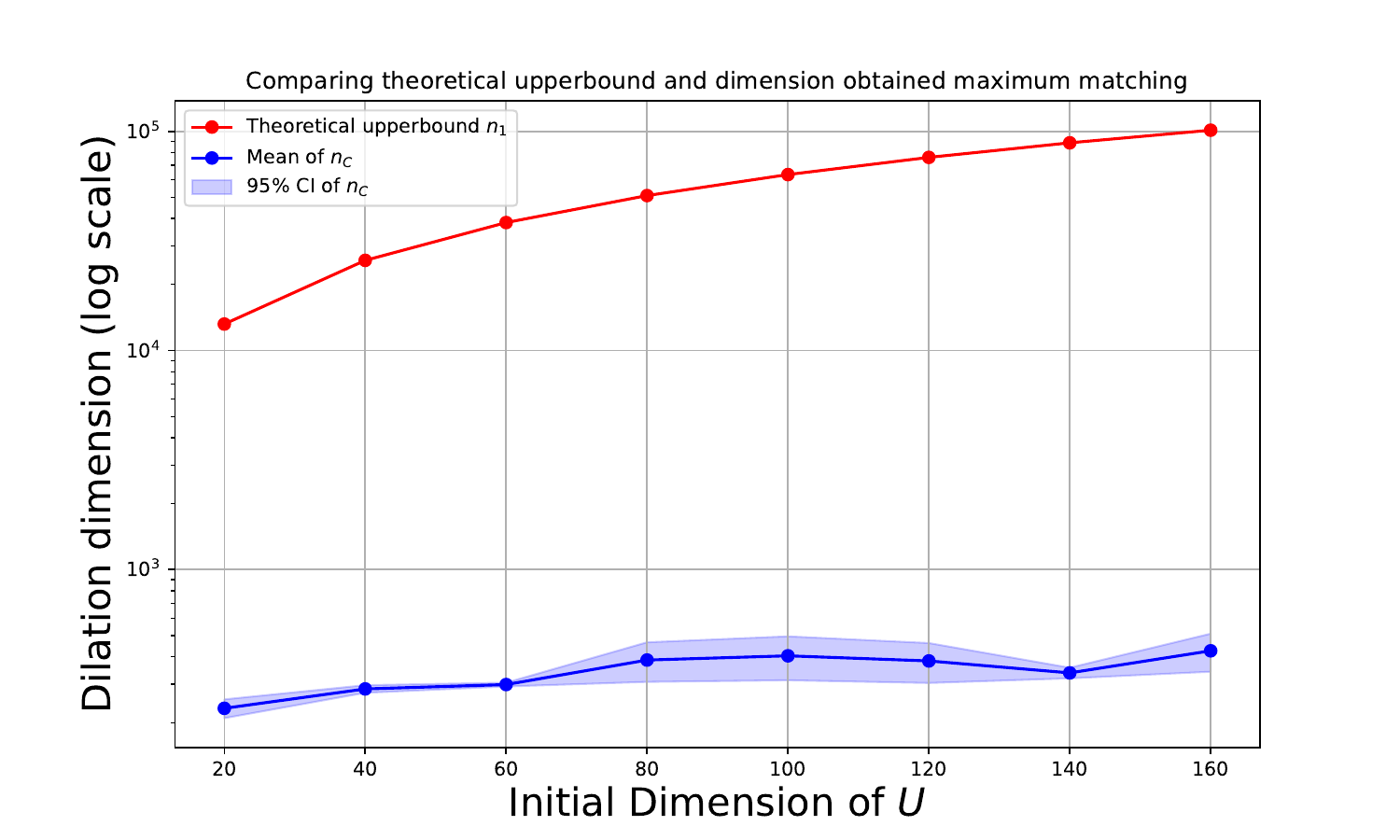}
  \caption{
Theoretical upperbound v.s. dimension from maximum matching of bipartite graph.}
  \label{fig:mmbpvsupper}
\end{figure}%

\section{Conclusion}

In this paper, we rigorously demonstrated the universality of Simple Cycle Reservoir (SCR) in the real domain, adopting the strategy from \cite{li2023simple}. Specifically, we proved that SCRs are universal approximations for any real-valued unrestricted linear reservoir system and any real-valued time-invariant fading memory filter over uniformly bounded input streams.

To achieve this, we constrained our approach to the real domain throughout the approximation pipeline. We performed cyclic dilation of orthogonally dilated coupling matrices by completing the set of roots of unity in the canonical form of orthogonal matrices, rather than using the eigendecomposition method involving unitary matrices. This ensured that all approximant systems remained in the real domain under orthogonal similarity.

We facilitated the completion of roots of unity by utilizing a maximum matching program in bipartite graphs, enabling a tighter dimension expansion of the approximation system. This method ensured efficient and effective expansion to achieve the desired approximation accuracy.

The fully constructive nature of our results is a crucial step towards the physical implementations of reservoir computing \cite{Appeltant2011InformationPU, NTT_cyclic_RC, bienstman2017, Abe2024May}.

\clearpage

\newcommand{\etalchar}[1]{$^{#1}$}

\clearpage

\section{Appendix: proofs of results}
\subsection{Proof of Theorem \ref{thm.dilation}}
\label{pf:thm.dilation}
\begin{proof} The uniform boundedness of input stream and contractiveness of $W$ imply that the state space $X\subseteq \mathbb{R}^n$ is closed and bounded, hence compact. The continuous readout map $h$ is therefore uniformly continuous on the state space $X$. 

By the uniform continuity of $h$, for {any} $\epsilon >0$, there exists $\delta>0$ such that  {for any $\x,\x' \in X$ with $\|\Bx-\Bx'\|<\delta$, we have $\|h(\Bx)-h(\Bx')\|<\epsilon$}. Let $\lambda = \|W\|$ and let $M$ denote the uniform bound of  $\{\u_t\}$ such that $\norm{\u_t}\leq M$ for all $t$. Since $\lambda <1$, we can choose $N$, such that:
 {
\[
2 M \|V\| \sum_{t>N} \|W\|^t  = 2 M \|V\|  \frac{\lambda^{N+1}}{1-\lambda} < \delta.
\]
}
Let $W_1 = W/\lambda$ and $n'=(N+1) \cdot n$. We have $\|W_1\|=1$ and therefore by Egerv\'{a}ry's dilation, there exists a orthogonal $n'\times n'$ matrix $U$  such that for all $1\leq k\leq N$, we have: 
\begin{align*}
    W_1^k = J^\top U^k J , 
\end{align*}
where $J:\mathbb{R}^{n} \hookrightarrow \mathbb{R}^{n'}$ is the canonical embedding of $\mathbb{R}^n$ onto the first $n$-coordinates of $\mathbb{R}^{n'}$. Let $W'=\lambda U$, then it follows immediately that: 
\[
W^k = \lambda^k W_1^k = J^\top \left(\lambda U\right)^k J = J^\top \left(W'\right)^k J.
\]
Define an $n'\times n$ matrix by:
\begin{equation}
\label{eqn.unitary.v'}
V'=\begin{bmatrix}
V  \\
0
\end{bmatrix},
\end{equation}
and the map $h':\mathbb{R}^{n'} \to \mathbb{R}^d$ given by:
\begin{align}
\label{eqn.unitary.h'}
& h'(x_1, x_2, \cdots,x_n,\cdots,x_{n'}) \\
& := h(x_1, x_2, \cdots, x_n) = h\circ J^\top(x_1, x_2, \cdots,x_n,\cdots,x_{n'}) \nonumber 
\end{align}

We now show that the reservoir system $R'=(W', V', h')$ is $\epsilon$-close to $R=(W,V,h)$.

For any input stream 
$\{\u_t\}_{t\in\mathbb{Z}_-}$, consider the states under the reservoir systems $R$ and $R'$ given by: 
\begin{align}
\label{eqn.unitary.state}
{\Bx_t} &= \sum_{k\geq 0} W^k V \u_{t-k} \nonumber \\
{\Bx_t'} &= \sum_{k\geq 0} \left(W'\right)^k V' \u_{t-k} 
\end{align}

For each $k \geq 0$, we denote the upper left $n\times n$ block of $\left(W'\right)^k$ by $A_k$. In other words:
\[\left(W'\right)^k = \begin{bmatrix}
A_k & * \\
* & *
\end{bmatrix}.\]
This splits into two cases. For each $0\leq k\leq N$, we have $A_k=W^k$ by construction of $W'$. Otherwise, for $k>N$, the power $k$ is beyond the dilation power bound and we no longer have $A_k=W^k$ in general. Nevertheless, since $A_k$ is a submatrix of $\left(W'\right)^k$, it's operator norm is bounded from above:
\[\norm{A_k}\leq \norm{W^k}\leq \norm{\left(W'\right)^k}\leq \norm{W'}^k = \lambda^k.\]

By Equation \eqref{eqn.unitary.v'}, we have $V' \u_{t-k}=\begin{bmatrix}
V \u_{t-k} \\
0
\end{bmatrix}$ and the state ${\Bx_t'}$ of $R'$ from Equation \eqref{eqn.unitary.state} thus becomes: 

\begin{align*}
{\Bx_t'} &= \sum_{k\geq 0} \left(W'\right)^k V' \u_{t-k} \\
&= \sum_{k\geq 0} \begin{bmatrix}
A_k & * \\
* & *
\end{bmatrix}  \begin{bmatrix}
V \u_{t-k} \\
0
\end{bmatrix}\\
&= \sum_{k=0}^N \begin{bmatrix}
W^k & * \\
* & *
\end{bmatrix}  \begin{bmatrix}
V \u_{t-k} \\
0
\end{bmatrix}  + \sum_{k>N} \begin{bmatrix}
A_k & * \\
* & *
\end{bmatrix}  \begin{bmatrix}
V \u_{t-k} \\
0
\end{bmatrix} \\
&= \begin{bmatrix}
\sum_{k=0}^N W^k V \u_{t-k} \\
*
\end{bmatrix} + \begin{bmatrix}
\sum_{k>N} A_k V \u_{t-k} \\
*
\end{bmatrix}.
\end{align*}

Let $J^\top({\Bx_t'})$ be the first $n$-coordinates of ${\Bx_t'}$. 
We have 

\[J^\top({\Bx_t'})= \sum_{k=0}^N W^k V \u_{t-k}+\sum_{k>N} A_k V \u_{t-k}.\]
Comparing with
\[{\Bx_t} = \sum_{k\geq 0} W^k V \u_{t-k} = \sum_{k=0}^N W^k V \u_{t-k} + \sum_{k>N} W^k V \u_{t-k},\]
it follows immediately that:
\begin{align*}
\norm{J^\top({\Bx_t'})-{\Bx_t}} &= \norm{0+\sum_{k>N} \left(A_k - W^k\right) V \u_{t - k}} \\
& \leq 
\sum_{k>N} \left(\norm{A_k}+\norm{W^k}\right) \norm{V} M
\end{align*}
Notice we have $\|W^k\|\leq \|W\|^k=\lambda^k$ and we also showed $\norm{A_k}\leq \lambda^k$, and therefore:

\[\|J^\top({\Bx_t'})-{\Bx_t}\|\leq  \sum_{k>N} 2\lambda^k \|V\| M < \delta \]
By Equation \eqref{eqn.unitary.h'} $h'({\Bx_t})=h(J^\top({\Bx_t}))$ and by uniform continuity of $h$ we have:
\[\|{\By_t}-{\By_t'}\| = \|h({\Bx_t}) - h(J^\top({\Bx_t}))\|<\epsilon\]
This finishes the proof. 
\end{proof}

\subsection{Proof of Theorem \ref{thm.to.permutation}}
\label{pf:thm.topermutation}
\begin{proof} Let $\epsilon >0$ be arbitrary. By the proof of Theorem \ref{thm.dilation},
the state space $X$ is compact and we can choose $\delta$ such that $\|\Bx-\Bx'\|<\delta$ implies $\|h(\Bx)-h(\Bx')\|<\epsilon$. Let $M:=\sup \|\u_t\|<\infty$, since $\lambda < 1$ we can pick $N>0$ such that 
\begin{align}
\label{eqn.halfdelta_end}
2 M \|V\|  \sum_{k>N} \lambda^k  < \frac{\delta}{2}.
\end{align}
Once we fix such an $N$, pick $\delta_0>0$ such that
\begin{align}
\label{eqn.halfdelta_front}
M \|V\| \sum_{k=0}^N ((\lambda+\delta_0)^k-\lambda^k)  < \frac{\delta}{2}.
\end{align}
Such a $\delta_0$ exists because the left-hand side is a finite sum that is continuous in $\delta_0$ and tends to $0$ as $\delta_0\to 0$.
According to Theorem~\ref{thm.perturb.unitary}, there exists a $n_1\times n_1$ full-cycle permutation matrix $P$, an orthogonal matrix $S$, and an orthogonal matrix $D$ such that:
\[
\norm{S^\top P S-\begin{bmatrix} U & 0 \\ 0 & D\end{bmatrix}}<\frac{1}{\lambda}\min\{\delta,\delta_0\}.
\]

Let $A = S^\top P S$ and let $Q_n:\mathbb{R}^{n_1} \hookrightarrow \mathbb{R}^n$ be the canonical projection onto the first $n$ coordinates. Consider the reservoir systems $R_0 :=(W_0, V_0, h_0)$ and $R_1 :=(W_1, V_0, h_0)$ defined by the following:
\begin{align*}
    W_0 = \lambda \begin{bmatrix} U & 0 \\ 0 & D\end{bmatrix} , & \quad V_0 = \begin{bmatrix} V \\ 0 \end{bmatrix} \\
    W_1 = \lambda A,&\quad  h_0(\Bx) = h(Q_n(\Bx)).
\end{align*}
Notice that the choice of $A$ ensures that $\norm{W_1 - W_0} <  \min\{\delta, \delta_0\}$.

The rest of the proof is outlined as follows: We first show that $R_0$ is equivalent to $R$, and then prove that $R_1$ is $\epsilon$-close to $R_0$. By Theorem \ref{thm.perturb.unitary}, $A$ is orthogonally equivalent to a full-cycle permutation matrix, and the desired results follow from \cite[Proposition 12]{li2023simple}. We now flesh out the above outline: 
We first establish that $R_0$ is equivalent to $R$. For any input stream $\{\u_t\}_{t\in\mathbb{Z}_-}$, the solution to $R_0$ is given by
\begin{align*}
    {\By}_t^{(0)} &= h_0\left(\sum_{k\geq 0} W_0^k V_0 \u_{t-k}\right) \\
    &= h\left(Q_n \left(\sum_{k\geq 0} \begin{bmatrix} (\lambda U)^k & 0 \\ 0 & (\lambda D)^k \end{bmatrix} \begin{bmatrix} V \\ 0 \end{bmatrix} \u_{t-k}\right)\right) \\
    &= h\left(Q_n \left(\begin{bmatrix} \sum_{k\geq 0} W^k V \u_{t-k} \\0 \end{bmatrix}\right)\right) \\
    &= h\left(\sum_{k\geq 0} W^k V \u_{t-k}\right).
\end{align*}
This is precisely the solution to $R$. 

We now show that $R_1$ is $\epsilon$-close to $R_0$. 

First, we observe that since $Q_n$ is a projection onto the first $n$-coordinates, it has operator norm $\norm{Q_n}=1$ and thus  {whenever  $\norm{\Bx-\Bx'}<\delta$, $\Bx,\Bx'\in\mathbb{R}^{n_1}$, we have} $\norm{Q_n \Bx- Q_n \Bx'}<\delta$ and thus $\|h(Q_n \Bx)-h(Q_n \Bx')\|<\epsilon$. Therefore it suffices to prove that for any input $\{\u_t\}$, the solution to $R_0$, given by
\[{\Bx}_t^{(0)} = \sum_{k\geq 0} W_0^k V_0 \u_{t-k},\]
is within $\delta$ to the solution to $R_1$, given by
\[{\Bx}_t^{(1)} = \sum_{k\geq 0} W_1^k V_0 \u_{t-k}.\]
By construction $V_0=\begin{bmatrix} V \\ 0\end{bmatrix}$ has $\|V_0\|=\|V\|$, hence:
\begin{align}
\label{eqn.state_r0r1_diff}
 \norm{{\Bx}_t^{(0)} - {\Bx}_t^{(1)}} &= \norm{\sum_{k\geq 0} (W_0^k-W_1^k) V_0 \u_{t-k}} \nonumber \\
&\leq \sum_{k\geq 0}  \norm{(W_0^k-W_1'^k)} \norm{V_0} M \nonumber \\
&= \sum_{k=0}^N  \norm{\left(W_0^k-W_1^k\right)} \norm{V} M \nonumber \\
& \quad + \sum_{j>N}  \norm{\left(W_0^k-W_1^k\right)} \norm{V} M.
\end{align}

Consider $\Delta=W_0-W_1$, we then have $\|\Delta\|<\delta_0$ and for each $0\leq j\leq N$, $W_0^j-W_1^j=(W_1+\Delta)^j - W_1^j$. Expanding $(W_1+\Delta)^j$, we get a summation of $2^j$ terms of the form $\prod_{i=1}^j X_i$, where each $X_i=W_1$ or $\Delta$.  For $s = 0,\ldots , j$, each of the $2^j$ terms has norm $\norm{\prod_{i=1}^j X_i}\leq \|W_1\|^{j-s} \|\Delta\|^{s}$ if there are $s$ copies of $\Delta$ among $X_i$. 
 {Removing the term $W_1^j$ from $(W_1+\Delta)^j$ results in all the remaining terms containing at least one copy of $\Delta$.} We thus arrive at: 

\begin{align}
    \norm{W_0^j - W_1^j} &= \norm{\left(W_1+\Delta\right)^j - W_1^j} 
    \nonumber \\
    &\leq \sum_{s=1}^j \binom{j}{s} \norm{W_1}^{j-s} \norm{\Delta}^{s} 
    \nonumber \\
   &\leq \sum_{s=1}^j \binom{j}{s} \lambda^{j-s} \delta_0^{s} = (\lambda+\delta_0)^j - \lambda^j.
\label{eq:bound}
\end{align}
Combining the above with Equation \eqref{eqn.halfdelta_front}, we have:
\[
M \|V\| \sum_{j=0}^N  \|(W_0^j-W_1^j)\| \leq 
M \|V\| \sum_{j=0}^N ((\lambda+\delta_0)^j - \lambda^j) < \frac{\delta}{2}. 
\]
On the other hand by Equation \eqref{eqn.halfdelta_end}, we obtain:
\begin{eqnarray*}
M \|V\| \sum_{j>N}  \|(W_0^j-W_1^j)\|  
&\leq& 
M \|V\| \sum_{j>N}  (\|W_0\|^j+\|W_1\|^j)\\
&\leq& 2 M \|V\| \sum_{j>N}  \lambda^j\\
&<&\frac{\delta}{2}.
\end{eqnarray*}
With the two inequalities above, Equation \eqref{eqn.state_r0r1_diff} thus becomes:
\[ \|{\Bx}_t^{(0)} - {\Bx}_t^{(1)} \| <\delta.\]
Uniform continuity of $h$ implies $\norm{h({\Bx}_t^{(0)}) - h({\Bx}_t^{(1)})}<\epsilon$, proving $R_1$ is $\epsilon$-close to $R_0$. 
Finally, by Theorem~\ref{thm.perturb.unitary}, $A$ is orthogonally equivalent to a full-cycle permutation matrix $P$, i.e. there exists orthogonal matrix $S$ such that $S^\top A S = P$. By \cite[Proposition 12]{li2023simple}, we obtain a reservoir system $R_c=(W_c, V_c, h_c)$ with $W_c=\lambda P$, such that $R_c$ is equivalent to $R_1$, which is in turn $\epsilon$-close to $R_0$. Since the original reservoir system $R$ is equivalent to $R_0$, $R$ is therefore $\epsilon$-close to $R_c$, as desired.
\end{proof}


\begin{thebibliography}{ASdS{\etalchar{+}}11}

\bibitem[ANH{\etalchar{+}}24]{Abe2024May}
Yuki Abe, Kazuki Nakada, Naruki Hagiwara, Eiji Suzuki, Keita Suda, Shin-ichiro
  Mochizuki, Yukio Terasaki, Tomoyuki Sasaki, and Tetsuya Asai.
\newblock {Highly-integrable analogue reservoir circuits based on a simple
  cycle architecture}.
\newblock {\em Sci. Rep.}, 14(10966):1--10, May 2024.

\bibitem[ASdS{\etalchar{+}}11]{Appeltant2011InformationPU}
Lennert Appeltant, Miguel~C. Soriano, Guy~Van der Sande, Jan Danckaert, Serge
  Massar, Joni Dambre, Benjamin Schrauwen, Claudio~R. Mirasso, and Ingo
  Fischer.
\newblock Information processing using a single dynamical node as complex
  system.
\newblock {\em Nature Communications}, 2, 2011.

\bibitem[CSK{\etalchar{+}}18]{bienstman2017}
Florian Denis-Le Coarer, Marc Sciamanna, Andrew Katumba, Matthias Freiberger,
  Joni Dambre, Peter Bienstman, and Damien Rontani.
\newblock {All-Optical Reservoir Computing on a Photonic Chip Using
  Silicon-Based Ring Resonators}.
\newblock {\em {IEEE Journal of Selected Topics in Quantum Electronics}},
  24(6):1 -- 8, November 2018.

\bibitem[GO18a]{Grigoryeva2018}
L.~Grigoryeva and J.-P. Ortega.
\newblock Universal discrete-time reservoir computers with stochastic inputs
  and linear readouts using non-homogeneous state-affine systems.
\newblock {\em J. Mach. Learn. Res.}, 19(1):892--931, January 2018.

\bibitem[GO18b]{grigoryeva2018echo}
Lyudmila Grigoryeva and Juan-Pablo Ortega.
\newblock Echo state networks are universal.
\newblock {\em Neural Networks}, 108:495--508, 2018.

\bibitem[GO19]{gonon2019reservoir}
Lukas Gonon and Juan-Pablo Ortega.
\newblock Reservoir computing universality with stochastic inputs.
\newblock {\em IEEE transactions on neural networks and learning systems},
  31(1):100--112, 2019.

\bibitem[Hal50]{Halmos1950}
Paul~R. Halmos.
\newblock Normal dilations and extensions of operators.
\newblock {\em Summa Brasil. Math.}, 2:125--134, 1950.

\bibitem[Jae01]{Jaeger2001}
H.~Jaeger.
\newblock The "echo state" approach to analysing and training recurrent neural
  networks.
\newblock Technical report gmd report 148, German National Research Center for
  Information Technology, 2001.

\bibitem[Jae02a]{Jaeger2002}
H.~Jaeger.
\newblock Short term memory in echo state networks.
\newblock Technical report gmd report 152, German National Research Center for
  Information Technology, 2002.

\bibitem[Jae02b]{jaeger2002a}
H.~Jaeger.
\newblock A tutorial on training recurrent neural networks, covering bppt,
  rtrl, ekf and the "echo state network" approach.
\newblock Technical report gmd report 159, German National Research Center for
  Information Technology, 2002.

\bibitem[JH04]{Jaeger2004}
H.~Jaeger and H.~Haas.
\newblock Harnessing nonlinearity: predicting chaotic systems and saving energy
  in wireless telecommunication.
\newblock {\em Science}, 304:78--80, 2004.

\bibitem[LFT24]{li2023simple}
Boyu Li, Robert~Simon Fong, and Peter Tino.
\newblock {Simple Cycle Reservoirs are Universal}.
\newblock {\em Journal of Machine Learning Research}, 25(158):1--28, 2024.

\bibitem[LJ09]{Lukoservicius2009}
M.~Lukosevicius and H.~Jaeger.
\newblock Reservoir computing approaches to recurrent neural network training.
\newblock {\em Computer Science Review}, 3(3):127--149, 2009.

\bibitem[MNM02]{Maass2002}
W.~Maass, T.~Natschlager, and H.~Markram.
\newblock Real-time computing without stable states: a new framework for neural
  computation based on perturbations.
\newblock {\em Neural Computation}, 14(11):2531--2560, 2002.

\bibitem[NTH21]{NTT_cyclic_RC}
Mitsumasa {Nakajima}, Kenji {Tanaka}, and Toshikazu {Hashimoto}.
\newblock {Scalable reservoir computing on coherent linear photonic processor}.
\newblock {\em Communications Physics}, 4(1):20, December 2021.

\bibitem[Pau02]{PaulsenBook}
Vern Paulsen.
\newblock {\em Completely bounded maps and operator algebras}, volume~78 of
  {\em Cambridge Studies in Advanced Mathematics}.
\newblock Cambridge University Press, Cambridge, 2002.

\bibitem[RT10]{rodan2010minimum}
Ali Rodan and Peter Ti\v{n}o.
\newblock Minimum complexity echo state network.
\newblock {\em IEEE transactions on neural networks}, 22(1):131--144, 2010.

\bibitem[TD01]{Tino2001}
P.~Ti\v{n}o and G.~Dorffner.
\newblock Predicting the future of discrete sequences from fractal
  representations of the past.
\newblock {\em Machine Learning}, 45(2):187--218, 2001.

\bibitem[ZZP{\etalchar{+}}20]{zhou2021informer}
Haoyi Zhou, Shanghang Zhang, Jieqi Peng, Shuai Zhang, Jianxin Li, Hui Xiong,
  and Wan Zhang.
\newblock Informer: Beyond efficient transformer for long sequence time-series
  forecasting.
\newblock In {\em AAAI Conference on Artificial Intelligence}, 2020.

\end{thebibliography}
\end{document}